\def\eqref#1{equation~\ref{#1}}
\def\1{\bm{1}}
\DeclareMathAlphabet{\mathsfit}{\encodingdefault}{\sfdefault}{m}{sl}
\SetMathAlphabet{\mathsfit}{bold}{\encodingdefault}{\sfdefault}{bx}{n}
\newcommand{\softmax}{\mathrm{softmax}}
\newcommand{\KL}{D_{\mathrm{KL}}}
\DeclareMathOperator*{\argmin}{arg\,min}
\useunder{\uline}{\ul}{}
\newcommand{\up}[1]{\textcolor[rgb]{0 0.6 0}{+#1} }
\newcommand{\down}[1]{\textcolor[rgb]{0 0 0.6}{-#1} }
\newtheorem{lemma}{Lemma}
\newtheorem{proposition}{Proposition}
\newtheorem{Proposition}{Proposition}
\newcommand\xrowht[2][0]{\addstackgap[.5\dimexpr#2\relax]{\vphantom{#1}}}
\newcommand{\revise}[1]{#1}
\title{Long-Tailed Partial Label Learning via \\Dynamic Rebalancing}
\author{Feng Hong$^1$ \quad
Jiangchao Yao$^{1,2,\dagger}$ \quad
Zhihan Zhou$^1$ \quad
Ya Zhang$^{1,2}$ \quad
Yanfeng Wang$^{1,2,\dagger}$ \quad
\\[1ex]
${}^1$Cooperative Medianet Innovation Center, Shanghai Jiao Tong University \\
${}^2$Shanghai AI Laboratory\\[1ex]
\text{\{feng.hong,\;Sunarker,\;zhihanzhou,\;ya\_zhang,\;wangyanfeng\}}@sjtu.edu.cn}
\begin{document}

\maketitle
\begingroup\renewcommand\thefootnote{$^{\dagger}$}
\footnotetext{Corresponding to: Jiangchao Yao~(Sunarker@sjtu.edu.cn) and Yanfeng Wang~(wangyanfeng@sjtu.edu.cn).}
\endgroup

\begin{abstract}
    Real-world data usually couples the label ambiguity and heavy imbalance, challenging the algorithmic robustness of partial label learning (PLL) and long-tailed learning (LT). The straightforward combination of LT and PLL, \textit{i.e.,} LT-PLL, suffers from a fundamental dilemma: LT methods build upon a given class distribution that is unavailable in PLL, and the performance of PLL is severely influenced in long-tailed context. We show that even with the auxiliary of an oracle class prior, the state-of-the-art methods underperform due to an adverse fact that the \emph{constant rebalancing} in LT is harsh to the label disambiguation in PLL. To overcome this challenge, we thus propose a \emph{dynamic rebalancing} method, termed as RECORDS, without assuming any prior knowledge about the class distribution. 
    Based on a parametric decomposition of the biased output,
    our method constructs a dynamic adjustment that is benign to the label disambiguation process and theoretically converges to the oracle class prior. Extensive experiments on three benchmark datasets demonstrate the significant gain of RECORDS compared with a range of baselines. The \href{https://github.com/MediaBrain-SJTU/RECORDS-LTPLL}{code} is publicly available.
\end{abstract}

\section{Introduction}
\label{sec:intro2}
Partial label learning~(PLL) origins from the real-world scenarios, where the annotation for each sample is an ambiguous set containing the groundtruth and other confusing labels. This is common when we gather annotations of samples from news websites with several tags~\citep{DBLP:conf/nips/LuoO10}, videos with several characters of interest~\citep{DBLP:journals/pami/ChenPC18}, or labels from multiple annotators~\citep{DBLP:journals/tcyb/GongLTYYT18}. The ideal assumption behind PLL is that the collected data is approximately uniformly distributed regarding classes. However, a natural distribution assumption in above real-world applications should be imbalance, especially follows the long-tailed law, which should be considered 
if we deploy the PLL methods into online systems. This thereby poses a new challenge about the robustness of algorithms to both category imbalance and label ambiguity in PLL studies.

Existing efforts, partial label learning and long-tailed learning, independently study the partial aspect of this problem in the past decades.
The standard PLL requires the label disambiguation from candidate sets along with the training of an ordinary classifier~\citep{DBLP:conf/nips/FengL0X0G0S20}. The mainstream to solve this problem is estimating label-wise confidence to implicitly or explicitly re-weight the classification loss, \textit{e.g.,} PRODEN~\citep{DBLP:conf/icml/LvXF0GS20}, LW~\citep{DBLP:conf/icml/WenCHL0L21}, CAVL~\citep{zhang2021exploiting} and CORR~\citep{DBLP:conf/icml/WuWZ22}, which have achieved the state-of-the-art performance in PLL. When it comes to the long-tailed learning, the core difficulty lies on diminishing the inherent bias induced by the heavy class imbalance~\citep{DBLP:journals/jair/ChawlaBHK02,DBLP:conf/icml/MenonNAC13}. The simple but fairly effective method is the logit adjustment~\citep{DBLP:conf/iclr/MenonJRJVK21,DBLP:conf/nips/RenYSMZYL20}, which has been demonstrated very powerful in a range of recent {studies}~\citep{DBLP:conf/iccv/CuiZ00J21,DBLP:conf/nips/NarasimhanM21}.

Nevertheless, considering a more practical long-tailed partial label learning (LT-PLL) problem, several dilemma remains based on the above two paradigms. One straightforward concern is
that the skewed long-tailed distribution exacerbates the bias to the head classes in the label disambiguation, easily resulting in the trivial solution that are excessively confident to the head classes. More importantly, most state-of-the-art long-tailed learning methods cannot be directly used in LT-PLL, since they require the class distribution available that is agnostic in PLL due to the label ambiguities. In addition, we discover that even after applying an oracle class distribution prior in the training, existing techniques underperform in LT-PLL and even fail in some cases.

In Figure~\ref{fig:out_dis_on_test}, we trace the average prediction of a PLL model PRODEN~\citep{DBLP:conf/icml/LvXF0GS20}, on a uniform test set. Normally, the backbone PLL method PRODEN exhibits the biased prediction towards head classes shown by the blue curve, and ideally, we expect with the intervention of the state-of-the-art logit adjustment in LT, the prediction for all classes will be equally confident, namely, the purple curve. However, as can be seen, PRODEN calibrated by the oracle prior actually performs worse and is prone to over-adjusting towards the tail classes as shown in the orange curve. This is because logit adjustment in LT leverages a constant class distribution prior to rebalance the training and does not consider the dynamic of label disambiguation. Specially, at the early stage where the true label is very ambiguous from the candidate set, over-adjusting the logit only leads to the strong confusion of the classifier, which is negative to the overall training. Thus, we can see in Figure~\ref{fig:out_dis_on_test}, the average prediction on the tail classes becomes too high along with the training.

Based on the above analysis, compared with the previous \emph{constant rebalancing} methods in PLL, a \emph{dynamic rebalancing} mechanism friendly to the training dynamic will be more preferred. To this intuition, we propose a novel method, termed as REbalanCing fOR Dynamic biaS (RECORDS) for LT-PLL.
Specifically, we perform a parametric decomposition of the biased model output and implement a dynamic adjustment by maintaining a prototype feature with momentum updates during training. 
The empirical and theoretical analysis demonstrate that our dynamic parametric class distribution is asymmetrically approaching to the statistical prior but benign to the overall training. 
A quick glance at the performance of RECORDS is the red curve in Figure~\ref{fig:out_dis_on_test}, which approximately fits the expected purple curve in the whole training progress.

\begin{figure}[!t]
    \vspace*{-12pt}
    \centering
    \includegraphics[width = 0.98\textwidth]{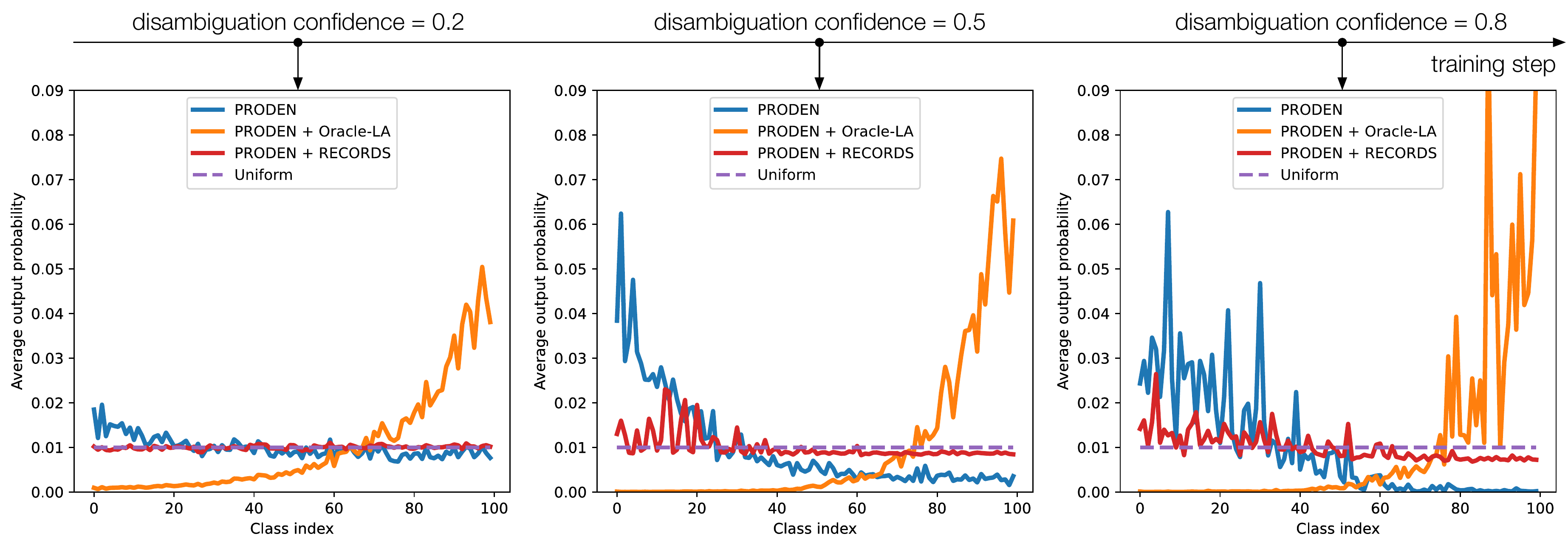}
    \vspace*{-8pt}
    \caption{Average classifier prediction (on the CIFAR-100 test set) of different methods during training (on LT-PLL training set CIFAR-100-LT with imbalance ratio $\rho = 100$ and ambiguity $q=0.05$). ``PRODEN"~\citep{DBLP:conf/icml/LvXF0GS20} is a popular PLL method. ``PRODEN + Oracle-LA" denotes PRODEN with the state-of-the-art logit adjustment~\citep{DBLP:conf/iclr/MenonJRJVK21,DBLP:conf/cvpr/HongHCSKC21} in LT under the oracle prior. ``PRODEN + RECORDS" is PRODEN with our proposed calibration. ``Uniform" characterizes the expected average confidence on different classes.}
    \vspace*{-16pt}
    \label{fig:out_dis_on_test}
    
\end{figure}

The contribution can be summarized as follows,
\begin{enumerate}
    \vspace*{-5pt}
    \item 
    We delve into a more practical but under-explored LT-PLL scenario,
    and identify its several challenges in this task that cannot be addressed and even lead to failure by the straightforward combination of the current long-tailed learning and partial label learning.
    \item We propose a novel RECORDS for LT-PLL that 
    conducts the dynamic adjustment to rebalance the training without requiring any prior about the class distribution. The theoretical and empirical analysis show that 
    the dynamic parametric class distribution
    is asymmetrically approaching to the oracle class distribution but more friendly to label disambiguation.
    \item  Our method is orthogonal to existing PLL methods and can be easily plugged into the current PLL methods in an end-to-end manner. Extensive experiments on three benchmark datasets under the long-tailed setting and a range of PLL methods demonstrate the effectiveness of the proposed RECORDS. Specially, we show a $32.03\%$ improvement in classification performance compared to the best CORR on the Pascal VOC dataset.
    \vspace*{-8pt}
\end{enumerate}
\section{Related Work}
\vspace{-6pt}
\textbf{Partial Label Learning (PLL).} In PLL, each training sample is associated with a candidate label set containing the ground-truth. Early explorations is mainly average-based~\citep{DBLP:journals/ida/HullermeierB06,DBLP:conf/ijcai/ZhangY15a,DBLP:journals/jmlr/CourST11}, which treat all candidate labels equally during model training. Their drawback is that the model training is easily misled by the false positive labels that co-occur with the ground truth. Some other identification-based methods consider the truth label as a latent variable and optimize the objective under some criterions~\citep{DBLP:conf/nips/JinG02,DBLP:conf/icml/LiuD14,DBLP:conf/kdd/NguyenC08,DBLP:conf/acml/YuZ15}. Recently, self-training methods~\citep{DBLP:conf/nips/FengL0X0G0S20,DBLP:conf/icml/WenCHL0L21,DBLP:conf/icml/LvXF0GS20,zhang2021exploiting} that gradually disambiguate the candidate label sets during training have achieved better results. PiCO~\citep{wang2022pico} introduces a contrastive learning branch that improves PLL performance by enhancing representation.

\textbf{Long-Tailed Learning (LT).} In LT, several methods have been proposed to consider the extremely skewed distribution during training~\citep{DBLP:conf/cvpr/CuiJLSB19,DBLP:conf/icml/ZhouYWHZ22,DBLP:journals/jair/ChawlaBHK02}. Re-sampling~\citep{DBLP:conf/icml/KubatM97,DBLP:conf/icdm/WallaceSBT11,DBLP:conf/icic/HanWM05} is one of the most widely used paradigm by down-sampling samples of head classes or up-sampling samples of tail classes. Re-weighting~\citep{DBLP:conf/icml/MorikBJ99,DBLP:conf/icml/MenonNAC13} adjusts the sample weights in the loss function during training. Transfer learning~\citep{DBLP:conf/eccv/ChuBLL20,DBLP:conf/cvpr/WangLH0X21,DBLP:conf/cvpr/KimJS20} seeks to transfer knowledge from head classes to tail classes to obtain a more balanced performance. Recently, logit adjustment techniques~\citep{DBLP:conf/iclr/MenonJRJVK21,DBLP:conf/nips/RenYSMZYL20,DBLP:conf/nips/TianLGHK20} that modify the output logits of the model by an offset term $\log \mathbb{P}_{train}(y)$ have been the state-of-the-art. 

\textbf{Long-Tailed Partial Label Learning (LT-PLL).} A few works approximately explore the LT-PLL problem. \citet{DBLP:conf/acml/LiuWCZZH21} implicitly alleviates data imbalance in the non-deep PLL by constraining the parameter space, which is hard to apply in deep learning context due to the complex optimization. 
Concurrently, SoLar~\citep{wang2022solar} improves the label disambiguation process in LT-PLL through the optimal transport technique. 
However, it requires an extra outer-loop to refine the label prediction via sinkhorn-knopp iteration, which increases the algorithm complexity.~Different from these works, this paper tries to solve the LT-PLL problem from the perspective of rebalancing.
\vspace{-7pt}
\section{Preliminaries}\label{sec:preliminaries}
\vspace{-5pt}
\subsection{Problem Formulation}
\vspace{-5pt}
Let $\mathcal{X}$ be the input space and $\mathcal{Y}=\{1,2,...,C\}$ be the class space. The candidate label set space $\mathcal{S}$ is the powerset of $\mathcal{Y}$ without the empty set: $\mathcal{S} = 2^\mathcal{Y}-\emptyset$. A long-tailed partial label training set can be denoted as $\mathcal{D}_{train}=\{(\bm{x}_i,y_i, S_i)\}_{i=1}^N \in (\mathcal{X},\mathcal{Y},\mathcal{S})^N$, where any $\bm{x}_i$ is associated with a candidate label set $S_i\subset \mathcal{Y}$ and its ground truth $y_i\in S_i$ is invisible. The sample number $N_c$ of each class $c\in\mathcal{Y}$ in the descending order exhibits a long-tailed distribution. Let $f(\bm{x};\theta)$ denote a deep model parameterized by $\theta$, transforming $x$ into an embedding vector. Then, the final output logits of $\bm{x}$ are given by $z(\bm{x})=g(f(\bm{x};\theta);\bm{W})=\bm{W}^\top f(\bm{x};\theta)$ where $g$ is a linear classifier with the parameter matrix $\bm{W}$. We leverage $\Theta=[\theta,\bm{W}]$ to denote all parameters of the deep network. 
For evaluation, a class-balanced test set $\mathcal{D}_{uni}=\{(\bm{x}_i, y_i)\}$ is used~\citep{DBLP:conf/icpr/BrodersenOSB10}. In a nutshell, the goal of LT-PLL is to learn $\Theta$ on $\mathcal{D}_{train}$ that
minimizes the following balanced error rate (BER) on $\mathcal{D}_{uni}$:
\vspace*{-5pt}
\begin{equation}
    \min_{\Theta} \mathrm{BER}(\Theta) 
    \overset{\mathbb{P}_{uni}(y) = \frac{1}{C}}{=} 
    \mathbb{P}_{(\bm{x},y)\in \mathcal{D}_{uni}}(y \neq \arg\!\max_{y^{\prime}\in \mathcal{Y}}z^{y^{\prime}}(\bm{x})).
\end{equation}
\vspace*{-10pt}

\vspace*{-8pt}
\subsection{Self-Training PLL Methods}\label{sec:preliminaries-PLL}
\vspace*{-4pt}
Self-training PLL methods maintain class-wise confidence weights $\bm{w}$ for each sample during training and formulate the loss function as a weighted summation of the classification loss by $\bm{w}$. 
$\bm{w}$ is updated in each training round based on the model output and gradually converges to the one-hot labels, converting PLL to ordinary classification.
Specially, PRODEN~\citep{DBLP:conf/icml/LvXF0GS20} assigns higher confidence weights to labels with larger output logit; LW~\citep{DBLP:conf/icml/WenCHL0L21} additionally considers the loss of non-partial labels and assigns the corresponding confidence weights to non-partial labels as well; CAVL~\citep{zhang2021exploiting} designs a identification strategy based on the idea of CAM and assigns hard confidence weights to labels; and CORR~\citep{DBLP:conf/icml/WuWZ22} applies consistency regularization in the disambiguation strategy.  Please refer to Appendix~\ref{appendix:PLL-baseline} for more details.
\vspace{-7pt}
\subsection{Logit Adjustment for Long-tailed Learning}
\vspace*{-4pt}
We briefly review logit adjustment (LA)~\citep{DBLP:conf/iclr/MenonJRJVK21,DBLP:conf/cvpr/HongHCSKC21}, a powerful technique for supervised long-tailed learning. 
First, according to the Bayes rule, we have the underlying class-probability $\mathbb{P}(y|\bm{x}) 
\propto \mathbb{P}(\bm{x}|y)\cdot \mathbb{P}(y)$. 
Directly minimizing the cross-entropy loss to reach the optimal has $\softmax(z^y(\bm{x})) = \mathbb{P}_{train}(y|\bm{x})$~\citep{DBLP:conf/eccv/YuLGT18}.
However, for a BER-optimal output logit $z_{uni}$, it is about the balanced data, which has $\softmax(z_{uni}^y(\bm{x}))=\mathbb{P}_{uni}(y|\bm{x})\propto  \mathbb{P}(\bm{x}|y)\cdot \mathbb{P}_{uni}(y)$ and $\mathbb{P}_{uni}(y) = \frac{1}{C}$~\citep{DBLP:conf/icml/MenonNAC13}. Then, we have the following relations
\begin{align}
    \label{eq:la}
    \begin{split}
        \mathbb{P}_{uni}(y|\bm{x}) &\propto \mathbb{P}(\bm{x}|y) \cdot \mathbb{P}_{train}(y)~\slash~\mathbb{P}_{train}(y)\\
        & \propto \mathbb{P}_{train}(y|\bm{x})~\slash~\mathbb{P}_{train}(y) \\
        & \propto \softmax(z^y(\bm{x})- \log \mathbb{P}_{train}(y)). \\
    \end{split}
\end{align}
That is to say, if we have the logit $z^y(\bm{x})$ by training on standard cross-entropy loss, a BER-optimal logit $z_{uni}^y(\bm{x}) = z^y(\bm{x}) - \log \mathbb{P}_{train}(y)$ can be obtained by subtracting an offset term $\log \mathbb{P}_{train}(y)$. 
Using $z_{uni}^y(\bm{x})$ as the test logit preserves the balanced part $\mathbb{P}(\bm{x}|y)$ of the output and removes the imbalanced part $\mathbb{P}_{train}(y)$ in a statistical sense. LA has demonstrated its effectiveness on a range of recent long-tailed learning methods~\citep{DBLP:conf/cvpr/ZhuWCCJ22,DBLP:conf/iccv/CuiZ00J21}.

\section{Method}
In LT-PLL, the model output is biased to head classes, and using biased output to update confidence weights will result in biased re-weighting, namely, a tendency to identify a more frequent label in the candidate set as the ground truth. In turn, biased  re-weighting on the loss function will further lead to a more severe imbalance on the model. In this section, we seek to propose a universal rebalancing algorithm that obtains unbiased output ${z}_{uni}$ from biased output $z$ in self-training PLL methods.

\subsection{Motivation}
Previous long-tailed learning techniques assume no ambiguity in supervision and focus only on the model bias caused by the training set distribution, which we term as \textit{constant rebalancing}. As shown in Figure~\ref{fig:out_dis_on_test}, the constant rebalancing (Oracle-LA) fails to rebalance model in LT-PLL. This is because in LT-PLL, not only the skewed training set distribution, but also the ambiguous supervision can affect the training, since the inferred label changes continuously along with the label disambiguation process. Specially, at the early stage, the prediction imbalance of PRODEN is not significant. Using the more skewed training set distribution instead leads to a model that is heavily biased towards the tail classes, inducing the difficulty in label disambiguation. Therefore, a \textit{dynamic rebalancing} method that considers the label disambiguation process can be intuitively more effective, \textit{e.g.,} RECORDS in Figure~\ref{fig:out_dis_on_test}. In the following, we will concretely present our method.

\subsection{RECORDS: Rebalancing for Dynamic Bias}\label{sec:records}
	
\begin{figure}[!htb]
\vspace*{-13pt}
\centering
\includegraphics[width=0.925\textwidth]{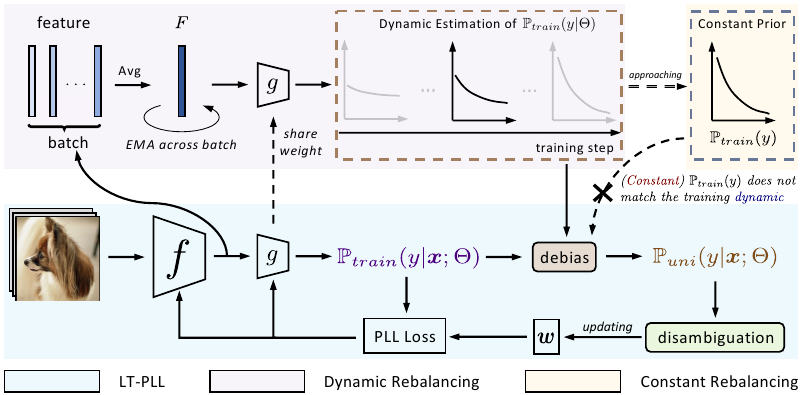}
\vspace*{-5pt}
\caption{Illustration for RECORDS. The class-wise confidence weights $\bm{w}$ are updating by the ``disambiguation" module for each sample and used as soft/hard pseudo labels in PLL Loss. The~main differences between the PLL baselines are the ``PLL Loss" and the ``disambiguation" module (see Table~\ref{tab:Self-Training-PLL-Methods}). The ``debias" module dynamically rebalances $\mathbb{P}_{train}(y|\bm{x};\Theta)$ to obtain $\mathbb{P}_{uni}(y|\bm{x};\Theta)$.~A~balanced $\mathbb{P}_{uni}(y|\bm{x};\Theta)$ helps tail samples to disambiguate labels more accurately and avoid being overwhelmed by head classes. A momentum-updated prototype feature is used to estimate $\mathbb{P}_{train}(y|\Theta)$, which is benign to label disambiguation and asymmetrically approaching the oracle prior $\mathbb{P}_{train}(y)$. In comparison, constant rebalancing does not consider the dynamic of the label~disambiguation.}
\label{fig:flowchart}
\vspace*{-13pt}
\end{figure}

The above analysis empirically suggests that the constant calibration by LA cannot match the dynamic of label disambiguation in LT-PLL. Formally, Equation~\ref{eq:la} fails because the label disambiguation dynamically affects 
model optimization 
during training, which induces the mismatch between the ground truth $\mathbb{P}_{train}(y)$ and the training dynamics of the label disambiguation. To mediate the conflict, we propose a parametric decomposition on the original rebalancing paradigm:
\begin{equation}
	\begin{split}
  \mathbb{P}_{uni}(y|\bm{x};\Theta) & \propto {\mathbb{P}}(\bm{x}|y;\Theta) \cdot {\mathbb{P}}_{train}(y|\Theta)~\slash~\mathbb{P}_{train}(y|\Theta) \\
        & \propto \mathbb{P}_{train}(y|\bm{x};\Theta)~\slash~\mathbb{P}_{train}(y|\Theta)\\
        &\propto \softmax(z^y(\bm{x}) - \log {\mathbb{P}_{train}(y|\Theta)}),
	\end{split}
\end{equation}
where ${\mathbb{P}}_{uni}(y|\bm{x};\Theta)$ is the parametric class-probabilities under a uniform class prior. 
Here, we start a perspective of dynamic rebalancing: a dynamic ${\mathbb{P}}_{train}(y|\Theta)$ adapted to the training process is required instead of the constant prior ${\mathbb{P}}_{train}(y)$ to rebalance the model in LT-PLL. Existing PLL~methods work to achieve better disambiguation results or improve the quality of the representation, \textit{i.e.,} learning ${\mathbb{P}}(\bm{x}|y;\Theta)$ that is closer to ${\mathbb{P}}(\bm{x}|y)$. 
Thus, our study is orthogonal to existing~PLL methods and can be combined together to improve the performance (see Section~\ref{sec:exp-Benchmark}). Also, the rebalanced output can boost the accuracy of label disambiguation and a better ${\mathbb{P}}(\bm{x}|y;\Theta)$ can be learned.

Here, our effective and lightweight design is the estimation of the dynamic class distribution ${\mathbb{P}}_{train}(y|\Theta)$ by the model via the training set, i.e., ${\mathbb{P}}_{train}(y|\Theta)=\mathbb{E}_{\bm{x}_i\in \mathcal{D}_{train}} \mathbb{P}_{train}(y|\bm{x};\Theta)$.
First, we use the Normalized Weighted Geometric Mean (NWGM) approximation~\citep{DBLP:conf/nips/BaldiS13} to put the expectation operation inside the $\softmax$ as follows:
\begin{equation}
	\label{eq:bias}
 	\begin{split}
		{\mathbb{P}}_{train}(y|\Theta) = \mathbb{E}_{\bm{x}_i\in \mathcal{D}_{train}}\softmax(z^y(\bm{x}_i)) & \overset{{NWGM}}{\approx}  \softmax(\mathbb{E}_{\bm{x}_i\in \mathcal{D}_{train}}z^y(\bm{x}_i)) \\
		&= \softmax(g^y(\mathbb{E}_{\bm{x}_i\in \mathcal{D}_{train}}f(\bm{x}_i;\theta);\bm{W})).
	 \end{split}
\end{equation}

Note that, NWGM approximation is widely used in dropout understanding~\citep{DBLP:journals/ai/BaldiS14}, caption generation~\citep{DBLP:conf/icml/XuBKCCSZB15}, and causal inference~\citep{DBLP:conf/cvpr/WangHZS20a}. The intuition behind Equation~\ref{eq:bias} is to capture more stable feature statistics by means of NWGM and combine with the latest updated linear classifier to estimate $\mathbb{P}_{train}(y|\Theta)$. Nevertheless,
with Equation~\ref{eq:bias}, we do not reach the final form, since directly estimating ${\mathbb{P}}_{train}(y|\Theta)$ requires to consider the whole dataset via an EM alternation.
To improve the efficiency, we design a momentum mechanism to accumulatively compute the expectation of features along with the training. Concretely, we maintain a prototype feature $F$ for the entire training set, using each batch's feature expectation for momentum updates:
\begin{equation}
	\label{eq:prototype-feature}
	F \leftarrow mF+(1-m)\mathbb{E}_{\bm{x}_i\in Batch}f(\bm{x}_i;\theta), 
\end{equation}
where $m\in[0,1)$ is a momentum coefficient.
Then, replacing $\mathbb{E}_{\bm{x}_i\in \mathcal{D}_{train}}f(\bm{x}_i;\theta)$ in Equation~\ref{eq:bias} by $F$ yields the final implementation of our method:
\begin{equation}
	\label{eq:debias}
	\begin{split}
		z_{uni}^y(\bm{x}) &= z^y(\bm{x}) - \log {\mathbb{P}}_{train}(y|\Theta)=z^y(\bm{x}) - \log \softmax(g^y(F;\bm{W})).
	\end{split}
\end{equation}

Our RECORDS is lightweight and can be easily plugged into existing PLL methods in an end-to-end manner. As illustrated in Figure~\ref{fig:flowchart}, we insert a ``debias" module before label disambiguation of each training iteration, converting $\mathbb{P}_{train}(y|\bm{x};\Theta)$ to $\mathbb{P}_{uni}(y|\bm{x};\Theta)$ via Equation~\ref{eq:debias}.

\subsection{Relation between Dynamic Rebalancing and Constant Rebalancing}\label{sec:Proposition}
In previous sections, we design a parametric class distribution to calibrate training in LT-PLL. However, it is not clear about the relation of $\mathbb{P}_{train}(y|\Theta)$ and $\mathbb{P}_{train}(y)$. In this section, we theoretically point out their connection. First, let $\mathbb{P}_{train}(y_j) = \mathbb{E}_{(\bm{x},y)\sim (\mathcal{X},\mathcal{Y})}\mathds{1}(y=y_j), y_j\in\mathcal{Y}$ denote the oracle prior, where $\mathds{1}(\cdot)$ denotes the indicator function. Considering a hypothesis space $H$ where each $h_{\Theta}\in H: \mathcal{X} \rightarrow \mathcal{Y}$ is a multiclass classiﬁer parameterized by $\Theta$ ($h=g\circ f$ in Figure~\ref{fig:flowchart}), we define the parametric class distribution for $h_{\Theta}$ as $\mathbb{P}_{train}(y_j|\Theta)= \mathbb{E}_{(\bm{x},y)\sim (\mathcal{X},\mathcal{Y})}\mathds{1}(h_{\Theta}(\bm{x})=y_j), y_j\in\mathcal{Y}$. Assume the disambiguated label for $(\bm{x},y,S)$ in LT-PLL is $\tilde{y}(\bm{x},S)\in S$ during training. Then, the empirical risk on basis of the disambiguated label under $\mathcal{D}_{train}$ is $R^{\mathcal{D}_{train}}(\Theta) = \frac{1}{N}\sum_{i=1}^N \mathds{1}(h_{\Theta}(\bm{x}_i)\neq \tilde{y}(\bm{x}_i,S_i))$.
Motivated by the theoretical study on the learnability of PLL~\citep{DBLP:conf/icml/LiuD14}, we propose Proposition~\ref{Proposition:bias-bound}
to discuss the relation between dynamic and constant rebalancing.

\begin{proposition}
	Let $\eta = \sup_{(\bm{x},y)\in\mathcal{X}\times\mathcal{Y}, y_j\in\mathcal{Y}, y_j\neq y} \mathbb{P}_{S|(\bm{x},y)}(y_j\in S)$ denote the ambiguity degree, $d_H$ be the Natarajan dimension of the hypothesis space $H$, 
	$\tilde{h} = h_{\tilde{\Theta}}$ be the optimal classifier on the basis of the label disambiguation, where $\tilde{\Theta} =\argmin_{\Theta}R^{\mathcal{D}_{train}}({\Theta})$.
	If the small ambiguity degree condition~\citep{cour2011learning,DBLP:conf/icml/LiuD14}) satisfies, namely, $\eta\in[0,1)$, then for $\forall \delta>0$,  the $L_2$ distance between $\mathbb{P}_{train}(y)$ and $\mathbb{P}_{train}(y|\tilde{\Theta})$ given $\tilde{h}$ is bounded as 
	$$L_2\left({\tilde{h}}\right)<\frac{4}{(\ln 2 - \ln (1+\eta))N}(d_H(\ln 2N +2\ln C)-\ln\delta+\ln 2)$$ 
	with probability at least $1-\delta$, where $N$ is the sample number and $C$ is the category number.
	\label{Proposition:bias-bound}
\end{proposition}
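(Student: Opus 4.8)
The plan is to reduce the $L_2$ distance between the oracle prior and the parametric class distribution to the probability that $\tilde h$ misclassifies a sample, and then control that misclassification probability via a PAC-style generalization argument in the spirit of \citet{DBLP:conf/icml/LiuD14}. First I would unpack the definitions: since $\mathbb{P}_{train}(y_j) = \mathbb{E}\,\mathds{1}(y=y_j)$ and $\mathbb{P}_{train}(y_j|\tilde\Theta) = \mathbb{E}\,\mathds{1}(\tilde h(\bm{x})=y_j)$, the per-coordinate difference is $\mathbb{E}[\mathds{1}(\tilde h(\bm{x})=y_j) - \mathds{1}(y=y_j)]$. Summing the squares over $j$ and using that for any fixed $(\bm{x},y)$ the vector $(\mathds{1}(\tilde h(\bm{x})=y_j))_j - (\mathds{1}(y=y_j))_j$ is either zero (when $\tilde h(\bm{x})=y$) or has exactly two nonzero entries $\pm 1$, Jensen's inequality gives $L_2(\tilde h) \le \sum_j \mathbb{E}|\mathds{1}(\tilde h=y_j)-\mathds{1}(y=y_j)| = 2\,\mathbb{P}(\tilde h(\bm{x})\neq y)$, i.e. the $L_2$ deviation is bounded by twice the true (0-1) risk of $\tilde h$ with respect to the genuine labels. (Depending on the exact normalization in the paper's $L_2$ notation, one may get $L_2(\tilde h) \le 2R_{\mathrm{true}}(\tilde h)$ or the squared version $\le 2R_{\mathrm{true}}(\tilde h)$; either way the key is that it is linear in the true risk.)

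Next I would relate the true risk $R_{\mathrm{true}}(\tilde h) = \mathbb{P}(\tilde h(\bm{x})\neq y)$ to the disambiguation-based empirical risk $R^{\mathcal{D}_{train}}(\tilde\Theta)$ that $\tilde h$ actually minimizes. The bridge is the small-ambiguity-degree condition: the standard argument (cf. \citet{cour2011learning,DBLP:conf/icml/LiuD14}) shows that the disambiguation risk stochastically dominates a scaled version of the true risk, so that an analogue of $\mathbb{E}\,\mathds{1}(\tilde h(\bm{x})\in S,\ \tilde h(\bm{x})\neq y) \le \eta\,\mathbb{P}(\tilde h(\bm{x})\neq y)$ holds, which after rearrangement yields a bound of the form $R_{\mathrm{true}}(\tilde h) \le \frac{1}{1-\eta}\big(\text{disambiguation risk} \big)$ — and the factor $\ln 2 - \ln(1+\eta)$ appearing in the statement suggests the sharper exponential/entropy form of this domination is used rather than the crude $1/(1-\eta)$. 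Then a uniform-convergence bound over $H$, using the Natarajan dimension $d_H$ and the standard multiclass growth-function estimate $\Pi_H(N) \le N^{d_H} C^{2d_H}$ (which is the source of the $d_H(\ln 2N + 2\ln C)$ term), controls the gap between $R^{\mathcal{D}_{train}}(\tilde\Theta)$ and its population counterpart with probability $1-\delta$, contributing the $-\ln\delta + \ln 2$ term and the overall $1/N$ scaling. Since $\tilde\Theta$ is the empirical minimizer, $R^{\mathcal{D}_{train}}(\tilde\Theta)$ is at most the empirical risk of any fixed good classifier, and under the realizability-type setup implicit here this can be taken to be small; chaining these inequalities gives the claimed bound.

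The main obstacle I expect is the middle step: correctly transferring the small-ambiguity-degree condition into a quantitative comparison between the true risk and the disambiguation risk with the \emph{exact} constant $\ln 2 - \ln(1+\eta)$. This requires carefully reproducing the concentration/entropy argument of \citet{DBLP:conf/icml/LiuD14} adapted to the ERM solution $\tilde\Theta$ rather than to a consistent learner, and making sure the event "$\tilde h$ is correct on the disambiguated label but wrong on the true label" is bounded by the ambiguity degree uniformly over $H$ — this is where the supremum in the definition of $\eta$ and the union bound over the $N^{d_H}C^{2d_H}$ behaviors of $H$ interact. The reduction of $L_2$ to the true risk (first paragraph) and the final chaining are routine by comparison; the delicate accounting is entirely in getting the ambiguity factor and the Natarajan-dimension growth term to combine into the stated single clean bound.
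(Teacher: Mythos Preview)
Your proposal is essentially correct and follows the paper's approach. The reduction $L_2(h)\le 2\,\mathbb{P}(h(\bm{x})\neq y)$ is exactly the paper's Lemma, and the rest is indeed the PAC-style argument of \citet{DBLP:conf/icml/LiuD14} combined with the Natarajan growth bound.

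One structural point worth noting: the paper does \emph{not} carry out the middle step as the two-stage decomposition you sketch in your second paragraph (first ``$R_{\mathrm{true}}\le\text{ambiguity factor}\times\text{disambiguation risk}$'', then a separate uniform-convergence bound on the disambiguation risk). Instead it works directly with the bad event $R=\{\mathcal{D}_{train}:\exists h,\ L_2(h)\ge\epsilon,\ R^{\mathcal{D}_{train}}(h)=0\}$ and applies the classical double-sample/ghost-sample symmetrization: introduce a second i.i.d.\ sample $\mathcal{D}_{train}'$, show $\mathbb{P}(R)<2\,\mathbb{P}(M)$ for the coupled event $M$ via Chernoff, then bound $\mathbb{P}(M)$ by averaging over random swaps $\sigma$ between the two samples. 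The ambiguity degree enters inside this swap summation through the single inequality $\prod_i\mathbb{P}(h(\bm{x}_i)=\tilde y(\bm{x}_i,S_i))\le\prod_i\mathbb{P}(h(\bm{x}_i)\in S_i)\le\eta^{a_1}$ (where $a_1$ is the number of true errors on the first sample), and the binomial sum over swaps turns $\eta^{a_1}$ into $\bigl(\tfrac{1+\eta}{2}\bigr)^{\epsilon N/4}$---this is precisely the ``sharper exponential form'' you anticipated. The Natarajan bound $|H|_{(\bm{x}^N,\bm{x}'^N)}\le(2N)^{d_H}C^{2d_H}$ then closes the argument. Your obstacle paragraph already points at exactly this, so you have the right picture; just be aware that the separate ``population disambiguation risk'' you mention is never defined or needed.
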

\vspace*{-5pt}

Proposition~\ref{Proposition:bias-bound} yields an important implication that alongside the label disambiguation, the dynamic estimation can progressively approach to the oracle class distribution under small ambiguity degree.
We kindly refer the readers to Appendix~\ref{appendix:proof} for the complete proof.
To further verify this, we trace 
the $L_2$ distance between the estimation of $\mathbb{P}_{train}(y|\Theta)$ and the oracle $\mathbb{P}_{train}(y)$ during training in each epoch and visualize it in Figure~\ref{fig:l2}. It can be observed that the distance between two distributions
\begin{wrapfigure}{r}{0.62\textwidth}
	\vspace*{-10pt}
	\centering
	\subfigure[$L_2$ distance during training]{
	\includegraphics[width=0.29\textwidth]{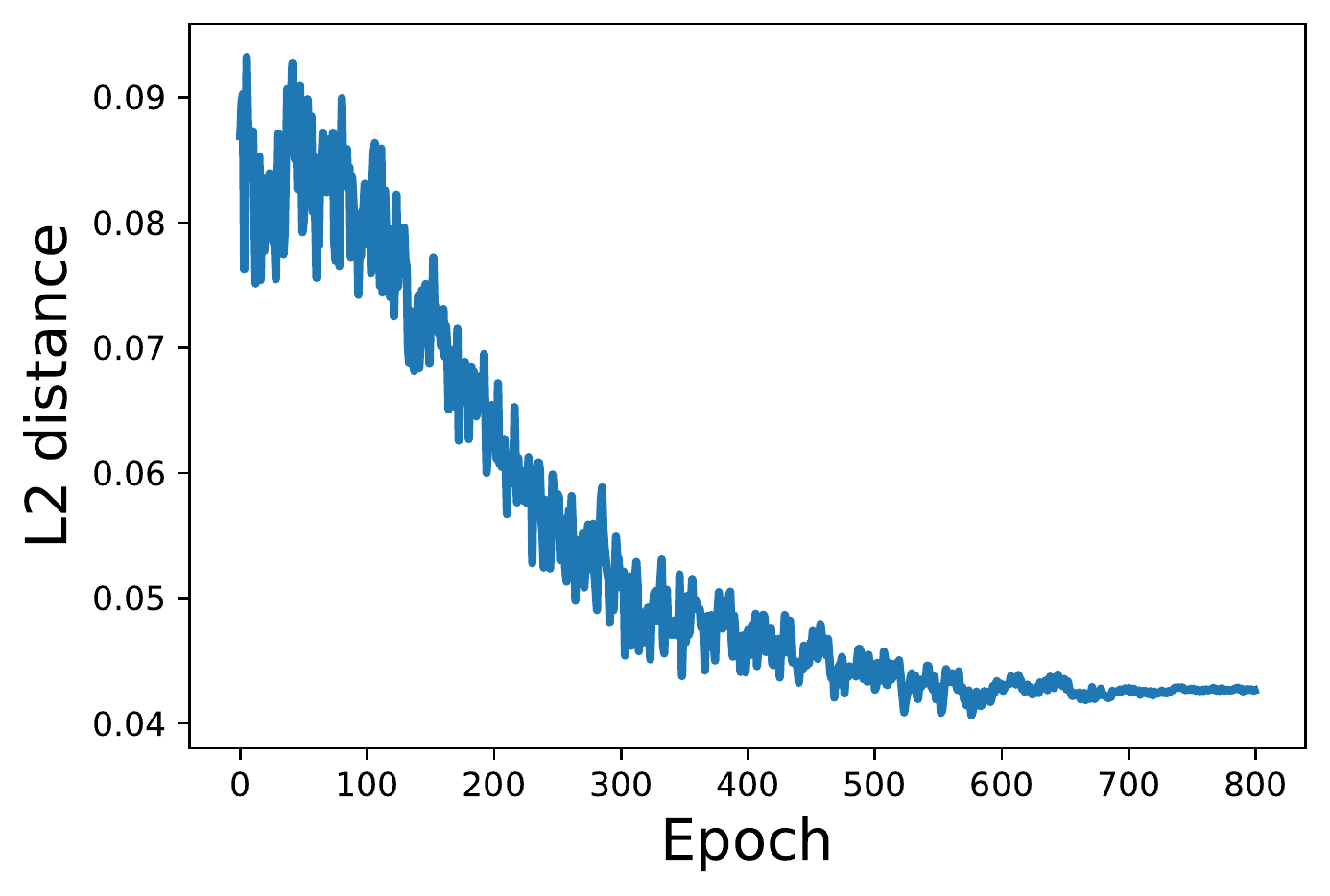}
	\label{fig:l2}
	}
	\subfigure[Estimated class distribution]{
	\includegraphics[width=0.29\textwidth]{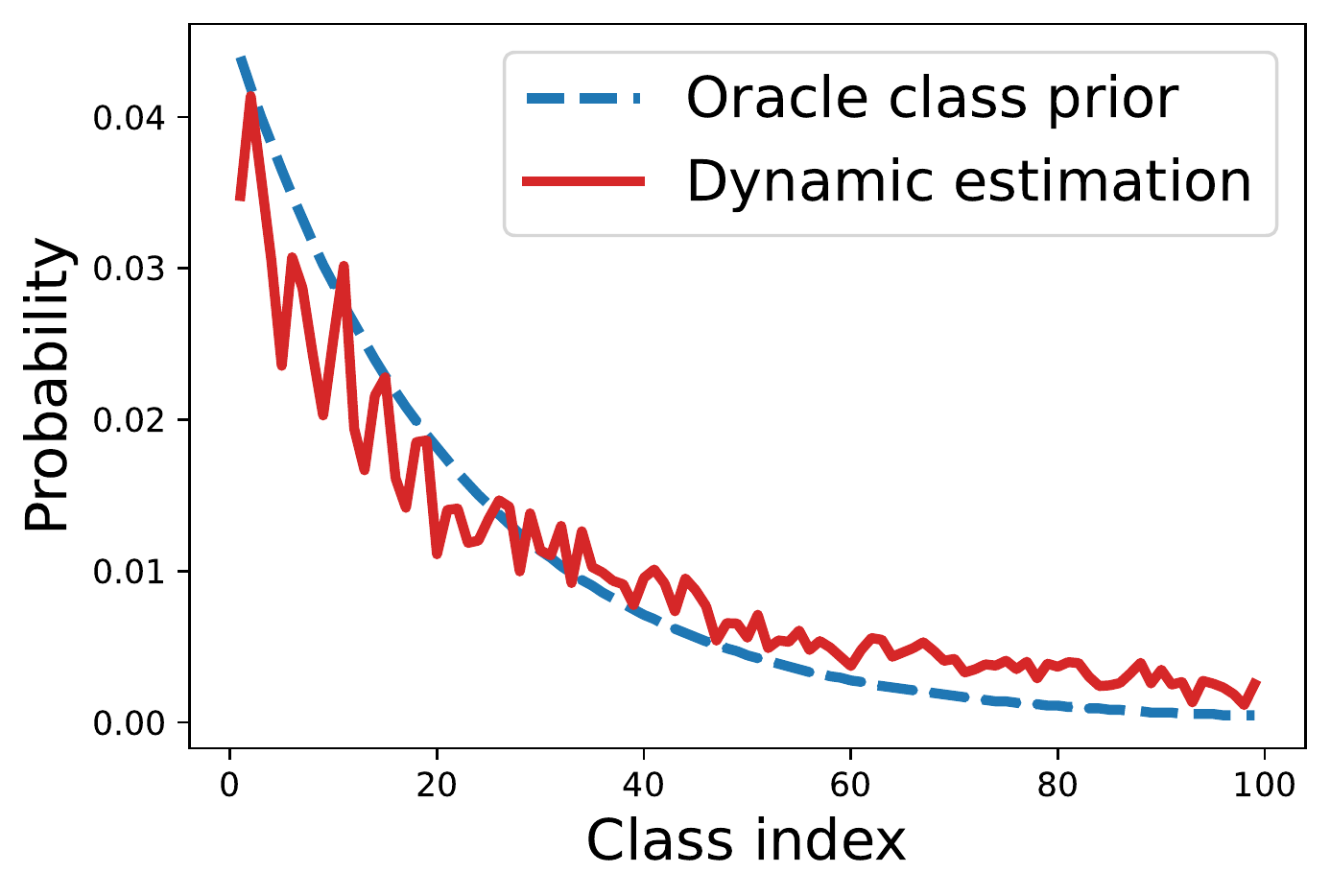}
	\label{fig:bias}
	}
	\vspace*{-8pt}
	\caption{(a) $L_2$ distance between the estimated class distribution and the oracle class prior during training. (b) The final estimated class distribution. Experiment is conducted by CORR + RECORDS on CIFAR-100-LT with imbalance ratio $\rho = 100$ and ambiguity $q=0.03$.} \label{fig:emp_Proposition}
	\vspace*{-26pt}
\end{wrapfigure}
are gradually minimized, \textit{i.e.,}, our parametric class distribution gradually converges to the statistical oracle class prior. Besides, as shown in Figure~\ref{fig:bias}, the final estimated class distribution is very close to the oracle class prior. In total, Figure~\ref{fig:emp_Proposition} indicates that our dynamic rebalancing is not only benign to the label disambiguation at the early stages, but also finally approach to the constant rebalancing.

\section{Experiments}\label{sec:exp}
\subsection{Toy Study}

\begin{figure}[!t]
	\vspace*{-10pt}
	\centering
	\includegraphics[width=0.935\textwidth]{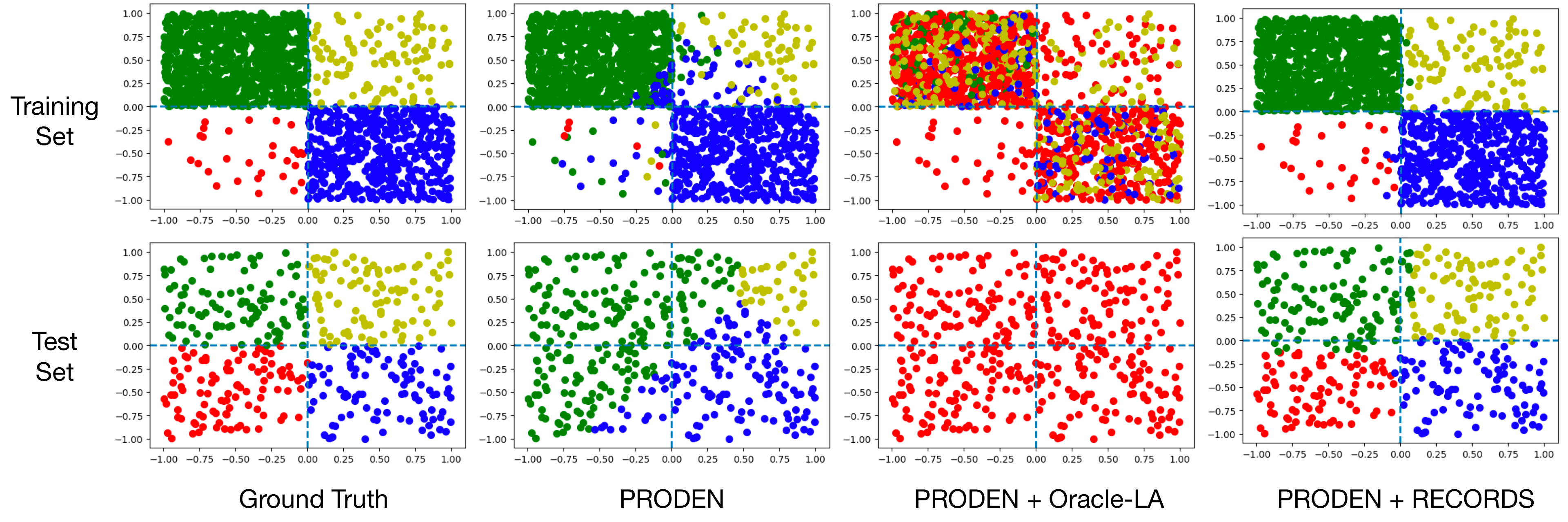}
	\vspace*{-11pt}
	\caption{Visualization of the toy study. The first column illustrates the groud truth distribution of four classes on the training set and on the test set, marked by color. The right three columns exhibit the label identification from the candidate set for training samples in the first row and the predictions on the test set in the second row \textit{w.r.t.} three methods.}
	\vspace*{-11pt}
	\label{fig:exp:toy}
\end{figure}

We simulate a four-class LT-PLL task where each class is distributed in different regions and their samples distribute uniformly in the corresponding regions. {\emph{The label set of each sample}} consists of a true label and negative labels having a 0.6 probability flipped from other classes. In the first column of Figure~\ref{fig:exp:toy}, we visualize the training set and the test set: the training set is highly imbalanced among classes, with the number of samples from each class being 30 (red), 100 (yellow), 500 (blue) and 1000 (green), while the test set is balanced.  In the right three columns of Figure~\ref{fig:exp:toy}, we show the results of identifying true labels from the candidate sets on the training set and the prediction results on the test set for different methods respectively. As can be seen, PRODEN results in the minority category (red) being completely overwhelmed by the majority categories (blue and green), and the yellow category being dominated mostly. After calibration with the oracle class prior, PRODEN + Oracle-LA instead predicts all data in the test set as the minority class (red). This suggests that constant rebalancing is prone to being over-adjusted towards tail classes coupling with the label disambiguation dynamic. In comparison, when using RECORDS as a dynamic rebalancing mechanism in LT-PLL, we achieve the desired results on both the training set and the test set.

\begin{table}[t!]
    \vspace*{-13pt}
	\centering
	\caption{Top-1 accuracy on three benchmark datasets. Bold indicates the superior results.}
	\label{tab:main}
	\resizebox{1\textwidth}{!}{
	\begin{tabular}{l|ccc|ccc|ccc|ccc|c}
		\toprule
						   & \multicolumn{6}{c|}{CIFAR-10-LT}                                                                     & \multicolumn{6}{c|}{CIFAR-100-LT}                                                                    & \multirow{3}{*}{\begin{tabular}[c]{@{}c@{}}\\ PASCAL \\ VOC\end{tabular}} \\ 
						   \cmidrule{1-13}
	Imbalance ratio $\rho$ & \multicolumn{3}{c|}{50}                           & \multicolumn{3}{c|}{100}                          & \multicolumn{3}{c|}{50}                           & \multicolumn{3}{c|}{100}                          &                                                                        \\ 
	\cmidrule{1-13}
	Ambiguity $q$          & 0.3            & 0.5            & 0.7            & 0.3            & 0.5            & 0.7            & 0.03           & 0.05           & 0.07           & 0.03           & 0.05           & 0.07           &                                                                        \\ 
	\midrule
	CORR                   & 76.12          & 56.45          & 41.56          & 66.38          & 50.09          & 38.11          & 42.29          & 38.03          & 36.59          & 38.39          & 34.09          & 31.05          & 24.43                                                                  \\
	+ Oracle-LA post-hoc   & 80.70          & 58.49          & 43.44          & 72.96          & 54.64          & 41.66          & 46.94          & 40.76          & 39.07          & 41.49          & 36.79          & 33.32          & 34.12                                                                  \\
	+ Oracle-LA     & 36.27          & 17.61          & 12.77          & 29.97          & 15.80          & 11.75          & 22.56          & 5.59           & 3.12           & 11.37          & 3.32           & 1.98           & 52.51                                                                  \\
	+ RECORDS               & \textbf{82.57} & \textbf{80.28} & \textbf{67.24} & \textbf{77.66} & \textbf{72.90} & \textbf{57.46} & \textbf{48.06} & \textbf{45.56} & \textbf{42.51} & \textbf{42.25} & \textbf{40.59} & \textbf{38.65} & \textbf{56.46}                                                         \\
	vs. CORR			   & \up{6.45}          & \up{23.83}         & \up{25.68}         & \up{11.28}          & \up{22.81}       & \up{19.35}         & \up{5.77}          & \up{7.53}           & \up{5.92}           & \up{3.86 }          & \up{6.40}           & \up{7.60}            & \up{32.03}                                                                 \\
	\midrule
	PRODEN                 & 73.12          & 54.45          & 41.37          & 63.55          & 47.37          & 38.06          & 39.23          & 35.45          & 33.90          & 34.52          & 32.04          & 29.40          & 22.39                                                                  \\
	+ Oracle-LA post-hoc   & 77.41          & 57.14          & 42.91          & 70.71          & 48.79          & 41.38          & 43.40          & 38.64          & 35.82          & 38.40          & 35.20          & 31.92          & 31.53                                                                  \\
	+ Oracle-LA     & 27.18          & 16.97          & 11.52          & 19.51          & 14.11          & 11.17          & 12.37          & 4.09           & 2.64           & 6.79           & 2.73           & 1.98           & 48.33                                                                  \\
	+ RECORDS               & \textbf{79.48} & \textbf{76.73} & \textbf{65.31} & \textbf{72.15} & \textbf{65.22} & \textbf{52.26} & \textbf{44.56} & \textbf{41.31} & \textbf{39.26} & \textbf{39.13} & \textbf{37.23} & \textbf{35.26} & \textbf{52.65}                                                         \\
	vs. PRODEN			   & \up{6.36}          & \up{22.28}          & \up{23.94}        & \up{8.60}          & \up{17.85}         & \up{14.2}          & \up{5.33}          & \up{5.86}          & \up{5.36}          & \up{4.61}          & \up{5.19}          & \up{5.86}           & \up{30.26}                                                                  \\
	\midrule
	LW                     & 70.11          & 37.67          & 22.73          & 64.78          & 39.57          & 23.54          & 35.54          & 29.50          & 27.86          & 31.58          & 28.09          & 24.65          & 19.41                                                                  \\
	+ Oracle-LA post-hoc   & 74.34          & 40.27          & 25.34          & 69.60          & 42.34          & 27.35          & 35.47          & 28.80          & 27.27          & 31.03          & 26.96          & 23.20          & 21.06                                                                  \\
	+ Oracle-LA     & 41.90          & 21.36          & 15.28          & 25.75          & 20.35          & 14.24          & 30.37          & 14.43          & 4.79           & 30.30          & 5.08           & 2.70           & 51.53                                                                  \\
	+ RECORDS      & \textbf{76.02} & \textbf{57.39} & \textbf{40.28} & \textbf{71.18} & \textbf{57.23} & \textbf{41.24} & \textbf{36.56} & \textbf{31.67} & \textbf{29.39} & \textbf{33.00} & \textbf{28.85} & \textbf{25.64} & \textbf{53.09}                                                         \\
	vs. LW					   & \up{5.91}          & \up{19.72}          & \up{17.55}          & \up{6.40}           & \up{17.66}          & \up{17.70}          & \up{1.02}          & \up{2.17}          & \up{1.53}          & \up{1.42}          & \up{0.76}          & \up{0.99}          & \up{33.68}                                                                  \\
	\midrule
	CAVL                   & 56.73          & 40.27          & 18.52          & 54.28          & 38.97          & 17.28          & 29.63          & 17.31          & 8.34           & 28.29          & 25.39          & 8.20           & 17.25                                                                  \\
	+ Oracle-LA post-hoc   & 55.23          & 39.76          & 18.34          & 51.37          & 37.28          & 14.58          & 29.65          & 14.86          & 5.76           & 28.34          & 26.27          & 5.80           & 22.27                                                                  \\
	+ Oracle-LA     & 22.16          & 14.97          & 11.50          & 18.29          & 14.23          & 10.67          & 17.31          & 4.36           & 2.83           & 7.24           & 2.55           & 2.03           & 50.78                                                                  \\
	+ RECORDS               & \textbf{67.27} & \textbf{61.23} & \textbf{40.71} & \textbf{64.35} & \textbf{58.27} & \textbf{37.38} & \textbf{42.25} & \textbf{36.53} & \textbf{29.13} & \textbf{36.93} & \textbf{31.49} & \textbf{24.98} & \textbf{53.07}                                                         \\
	vs. CAVL					   & \up{10.54}          & \up{20.96}          & \up{22.19}          & \up{10.07}          & \up{19.30}           & \up{20.1}           & \up{12.62}          & \up{19.22}          & \up{14.27}          & \up{8.64}           & \up{6.10}            & \up{16.78}          & \up{35.82}       \\
	\bottomrule                                                          
	\end{tabular}}
	\vspace*{-13pt}
\end{table}
\begin{table}[t!]
\vspace*{-5pt}
	\caption{Fine-grained analysis on CIFAR-100-LT with $\rho = 100$ and $q\in\{0.03,0.05,0.07\}$. Many/Medium/Few corresponds to three partitions on the long-tailed data. 
	}
	\label{tab:results-mmf}
	\centering
	\resizebox{1\textwidth}{!}{
		\begin{tabular}{l|cccc|cccc|cccc}
			\toprule
			                      & \multicolumn{4}{c|}{$q=0.03$} & \multicolumn{4}{c|}{$q=0.05$} & \multicolumn{4}{c}{$q=0.07$}                                                                                                                                  \\
								\cmidrule{2-13}
			\multirow{-2}{*}{Method}& Many                          & Medium                        & Few                          & Overall        & Many        & Medium      & Few       & Overall        & Many       & Medium     & Few       & Overall        \\
			\midrule
			CORR                     & 68.43                         & 37.40                         & 4.50                         & 38.39          & 67.51       & 29.60       & 0.33      & 34.09          & 68.86      & 19.80      & 0.07      & 31.05          \\
			+ Oracle-LA post-hoc         & 70.37                         & 41.89                         & 7.33                         & 41.49          & 70.46       & 33.40       & 1.47      & 36.79          & 69.77      & 24.86      & 0.67      & 33.32          \\
			+ Oracle-LA           & 11.03                         & 12.34                         & 10.63                        & 11.37          & 0.34        & 4.46        & 5.47      & 3.32           & 0.00       & 0.71       & 5.77      & 1.98           \\
			+ RECORDS                & 66.37                         & 42.54                         & 13.77                        & \textbf{42.25} & {68.49}     & 40.20       & 8.50      & \textbf{40.59} & {69.97}    & 36.71      & 4.37      & \textbf{38.65} \\
			vs. CORR                & \down{2.06}                   & \up{5.14}                     & \up{9.27}                    & \up{3.86}      & \up{0.98}   & \up{10.60}  & \up{8.17} & \up{6.50}       & \up{1.11}  & \up{16.91} & \up{4.30}  & \up{7.60}       \\

			\bottomrule
		\end{tabular}}
		\vspace*{-16pt}
\end{table}

\subsection{Benchmark Results}
\label{sec:exp-Benchmark}

\textbf{Long-tailed partial label datasets.} We evaluate RECORDS on three datasets: CIFAR-10-LT~\citep{DBLP:conf/cvpr/0002MZWGY19}, CIFAR-100-LT~\citep{DBLP:conf/cvpr/0002MZWGY19} and PASCAL VOC. For CIFAR-10-LT and CIFAR-100-LT, we build the long-tailed version of of CIFAR-10/100 with imbalanced ratio $\rho \in \{50,100\}$. Following \cite{DBLP:conf/icml/LvXF0GS20,DBLP:conf/icml/WenCHL0L21}, we adopt the \emph{uniform} setting in CIFAR-10-LT and CIFAR-100-LT, \textit{i.e.,} $\mathbb{P}(\overline{y}\in S|\overline{y}\neq y) = q$ to generate candidate label set of each sample. 
PASCAL VOC is a real-world LT-PLL dataset constructed from PASCAL VOC 2007~\citep{pascal-voc-2007}. Specifically, we crop objects in images as instances and all objects appearing in the same original image are regarded as the labels of a candidate set and empirically we can observe the significant class imbalance. Note that, here we are the first to conduct experiments based on deep models on real-world datasets, while previous PLL real-world datasets are tabular and only suitable for linear models or shallow MLPs~\citep{zhang2021exploiting}. Please refer to Appendix~\ref{appendix:dataset} for more details.

\textbf{Baselines.} We consider the state-of-the-art PLL algorithms including {PRODEN}~\citep{DBLP:conf/icml/LvXF0GS20}, {LW}~\citep{DBLP:conf/icml/WenCHL0L21}, {CAVL}~\citep{zhang2021exploiting}, and {CORR}~\citep{DBLP:conf/icml/WuWZ22} and their combination with the state-of-the-art rebalancing method logit adjustment (LA)~\citep{DBLP:conf/iclr/MenonJRJVK21,DBLP:conf/cvpr/HongHCSKC21}. We use two rebalancing variants: 
(1) Oracle-LA: rebalance a PLL model during training with the oracle class prior by LA; 
(2) Oracle-LA post-hoc: rebalance a pre-trained PLL model with the oracle prior using LA in a post-hoc way. Note that, for our methods, we directly apply RECORDS into PLL baselines but without requiring the oracle class prior. For comparisons with concurrent LT-PLL work \emph{SoLar}~\citep{wang2022solar}, please refer to Appendix~\ref{appendix:solar}.

\textbf{Implementation details.} We use 18-layer ResNet as the backbone. The standard data augmentations are applied as in \cite{DBLP:conf/nips/CubukZS020}. The mini-batch size is set to $256$ and all the methods are trained using SGD with momentum of $0.9$ and weight decay of $0.001$ as the optimizer. The hyper-parameter $m$ in Equation~\ref{eq:prototype-feature} is set to $0.9$ constantly. The initial learning rate is set to $0.01$. We train the model for $800$ epochs with the cosine learning rate scheduling. 

\textbf{Overall performance.} In Table~\ref{tab:main}, we summarize the Top-1 accuracy
on three benchmark LT-PLL datasets. Our method clearly exhibits superior performance on all datasets with different experimental settings under CORR, PRODEN, LW and CAVL. Specially, compared to the best PLL baseline CORR, RECORDS significantly improve the accuracy by $6.45\%$-$25.68\%$ on CIFAR-10-LT,
$3.86\%$-$7.60\%$ on CIFAR-100-LT,
and $32.03\%$ on PASCAL VOC. 
When Oracle-LA is applied into PLL baselines, it induces severe performance degradation on CIFAR
but improves significantly on PASCAL VOC. In comparison, Oracle-LA post-hoc can better alleviates the class imbalance on CIFAR.
However, both of them cannot address the adverse effect of constant rebalancing on label disambiguation. In comparison,
our RECORDS that solves this problem through dynamic rebalancing during training, achieves the consistent and the best improvements among all methods.

\textbf{Fine-grained analysis.} In Figure~\ref{fig:per-class-cifar10}, we visualize the per-class accuracy on CIFAR-10-LT with $\rho = 100$ under the best PLL baseline CORR. 
As expected, dominant classes generally exhibits a higher accuracy in CORR. 
Oracle-LA post-hoc can improve CORR performance on medium classes, but accuracy remains poor on tail classes, especially when label ambiguity is high. Oracle-LA performs very poorly on head and medium classes due to over-adjustment towards tail classes. Our RECORDS systematically improves performance over CORR, particularly on rare classes. 
In Table~\ref{tab:results-mmf}, we show the Many-Medium-Few\footnote{Many/Medium/Few corresponds to classes with $>$$100$, $[20,100]$, and $<$$20$ images~\citep{DBLP:conf/iclr/KangXRYGFK20}.} accuracies of different methods on CIFAR-100-LT with $\rho = 100$. 
Our RECORDS shows the significant and consistent gains on Medium and Few classes when combined with CORR. 
As demonstrated in many long-tailed learning literatures~\citep{DBLP:conf/iclr/KangXRYGFK20,DBLP:conf/iclr/MenonJRJVK21}, there might be a head-to-tail accuracy tradeoff in LT-PLL, and our method achieve the best overall accuracy in this tradeoff. More results 
are summarized in Appendix~\ref{apx:FG-all}.

\begin{figure}[t!]
	\vspace*{-13pt}
	\centering
	\includegraphics[width=0.95\textwidth]{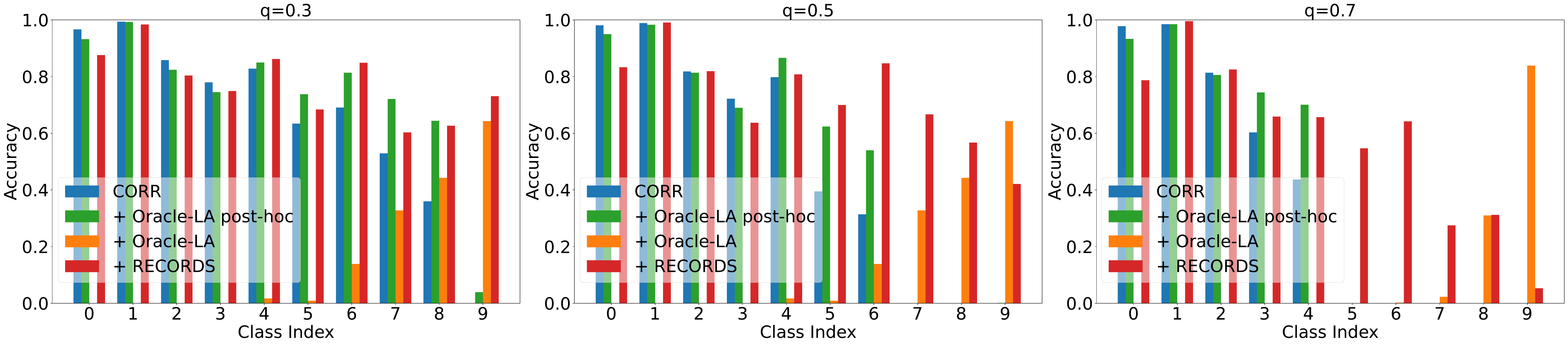}
	\vspace*{-12pt}
	\caption{The per-class accuracy on CIFAR-10-LT with 
	$\rho = 100$ and 
	$q\in\{0.3,0.5,0.7\}$. 
	}
	\vspace*{-9pt}
	\label{fig:per-class-cifar10}
\end{figure}

\begin{table}[!t]
	\vspace*{-8pt}
	\caption{Fine-grained analysis on CIFAR-100-LT-NU with $\rho = 100$ and $q\in\{0.03,0.05,0.07\}$. 
	}
	\label{tab:results-mmf-H}
	\centering
	\resizebox{1\textwidth}{!}{
		\begin{tabular}{l|cccc|cccc|cccc}
			\toprule
			                      & \multicolumn{4}{c|}{$q=0.03$} & \multicolumn{4}{c|}{$q=0.05$} & \multicolumn{4}{c}{$q=0.07$}                                                                                                                                  \\
								\cmidrule{2-13}
			\multirow{-2}{*}{Method}& Many                          & Medium                        & Few                          & Overall        & Many        & Medium      & Few       & Overall        & Many       & Medium     & Few       & Overall        \\
			\midrule
			CORR                     & 66.71                         & 34.57                         & 3.13                         & 36.39          & 70.23       & 21.57      & 0.13      & 32.17          & 65.37       & 16.11      & 0.13      & 28.56          \\
			+ Oracle-LA post-hoc         & 69.80                         & 40.43                         & 4.57                         & 39.95          & 71.20       & 28.77      & 1.30      & 35.38          & 68.71       & 20.17      & 1.10      & 31.44          \\
			+ Oracle-LA           & 3.26                          & 9.97                          & 13.10                        & 8.56           & 0.86        & 5.34       & 12.27     & 5.85           & 0.00        & 0.25       & 16.87     & 5.92           \\
			+ RECORDS                & 65.60                         & 41.26                         & 11.77                        & \textbf{40.93} & 66.40       & 38.71      & 8.57      & \textbf{39.36} & 62.09       & 37.40      & 7.47      & \textbf{37.06} \\
			vs. CORR                 & \down{1.11}                   & \up{6.69}                     & \up{8.44}                    & \up{4.54}      & \down{3.83} & \up{17.14} & \up{8.44} & \up{7.19}      & \down{3.28} & \up{21.29} & \up{7.34} & \up{8.50}      \\

			\bottomrule
		\end{tabular}}
		\vspace*{-13pt}
\end{table}

\begin{table}[t!]
\vspace*{-5pt}
\caption{Comparision with more baselines for imbalanced learning.}
\label{tab:add-LT-baseline}
\resizebox{\textwidth}{!}{
\begin{tabular}{l|ccc|ccc|ccc|ccc}
\toprule
                       & \multicolumn{6}{c|}{CIFAR-10-LT}                                                                     & \multicolumn{6}{c}{CIFAR-100-LT}                                                                    \\
\midrule
Imbalance ratio $\rho$ & \multicolumn{3}{c|}{50}                           & \multicolumn{3}{c|}{100}                          & \multicolumn{3}{c|}{50}                           & \multicolumn{3}{c}{100}                          \\
\midrule
ambiguity $q$          & 0.3            & 0.5            & 0.7            & 0.3            & 0.5            & 0.7            & 0.03           & 0.05           & 0.07           & 0.03           & 0.05           & 0.07           \\
\midrule
CORR                   & 76.12          & 56.45          & 41.56          & 66.38          & 50.09          & 38.11          & 42.29          & 38.03          & 36.59          & 38.39          & 34.09          & 31.05          \\
+ Oracle-BCL           & 43.76          & 25.24          & 17.34          & 33.23          & 18.34          & 15.72          & 27.23          & 10.34          & 8.34           & 14.34          & 5.82           & 4.10           \\
+ Oracle-PaCO          & 42.45          & 26.34          & 16.23          & 34.21          & 18.23          & 14.23          & 27.37          & 11.23          & 7.23           & 15.82          & 5.62           & 4.72           \\
+ LDD                  & 75.23          & 58.02          & 42.14          & 67.59          & 51.38          & 39.28          & 42.91          & 39.05          & 36.98          & 39.20          & 34.87          & 32.15          \\
+ SFN                  & 75.98          & 57.13          & 41.77          & 66.91          & 50.23          & 38.71          & 43.14          & 38.62          & 37.03          & 39.31          & 33.91          & 31.67          \\
+ RECORDS              & \textbf{82.57} & \textbf{80.28} & \textbf{67.24} & \textbf{77.66} & \textbf{72.90} & \textbf{57.46} & \textbf{48.06} & \textbf{45.56} & \textbf{42.51} & \textbf{42.25} & \textbf{40.59} & \textbf{38.65}\\
\bottomrule
\end{tabular}}
\vspace*{-16pt}
\end{table}

\vspace*{-3pt}
\subsection{Further Analysis}
\vspace*{-3pt}
\begin{table}[t]
\vspace*{-20pt}
\centering
\revise{
\caption{Comparison with other dynamic strategies on CIFAR-10-LT and CIFAR-100-LT.}
\label{tab:dynamic-strategy}
\resizebox{\textwidth}{!}{%
\begin{tabular}{l|ccc|ccc|ccc|ccc}
\toprule
                        & \multicolumn{6}{c|}{CIFAR-10-LT}                                                                     & \multicolumn{6}{c}{CIFAR-100-LT}                                                                    \\
                        \midrule
Imbalance ratio $\rho$  & \multicolumn{3}{c|}{50}                           & \multicolumn{3}{c|}{100}                          & \multicolumn{3}{c|}{50}                           & \multicolumn{3}{c}{100}                          \\
\midrule
Ambiguity $q$           & 0.3            & 0.5            & 0.7            & 0.3            & 0.5            & 0.7            & 0.03           & 0.05           & 0.07           & 0.03           & 0.05           & 0.07           \\
\midrule
CORR                    & 76.12          & 56.45          & 41.56          & 66.38          & 50.09          & 38.11          & 42.29          & 38.03          & 36.59          & 38.39          & 34.09          & 31.05          \\
+ Temp Oracle-LA & 81.37          & 43.62          & 18.10          & 76.09          & 25.88          & 16.11          & 47.44          & 43.46          & 29.75          & 41.78          & 39.19          & 33.69          \\
+ Epoch RECORDS    & 75.43          & 70.27          & 59.50          & 69.38          & 63.12          & 47.85          & 46.54          & 43.07          & 38.28          & 41.58          & 37.14          & 34.38          \\
+ RECORDS               & \textbf{82.57} & \textbf{80.28} & \textbf{67.24} & \textbf{77.66} & \textbf{72.90} & \textbf{57.46} & \textbf{48.06} & \textbf{45.56} & \textbf{42.51} & \textbf{42.25} & \textbf{40.59} & \textbf{38.65} \\
\bottomrule
\end{tabular}%
}}
\end{table}

\begin{figure}[t]
	\vspace*{-13pt}
	\centering
	\subfigure[Linear Probing]{
	\includegraphics[width=0.53\textwidth]{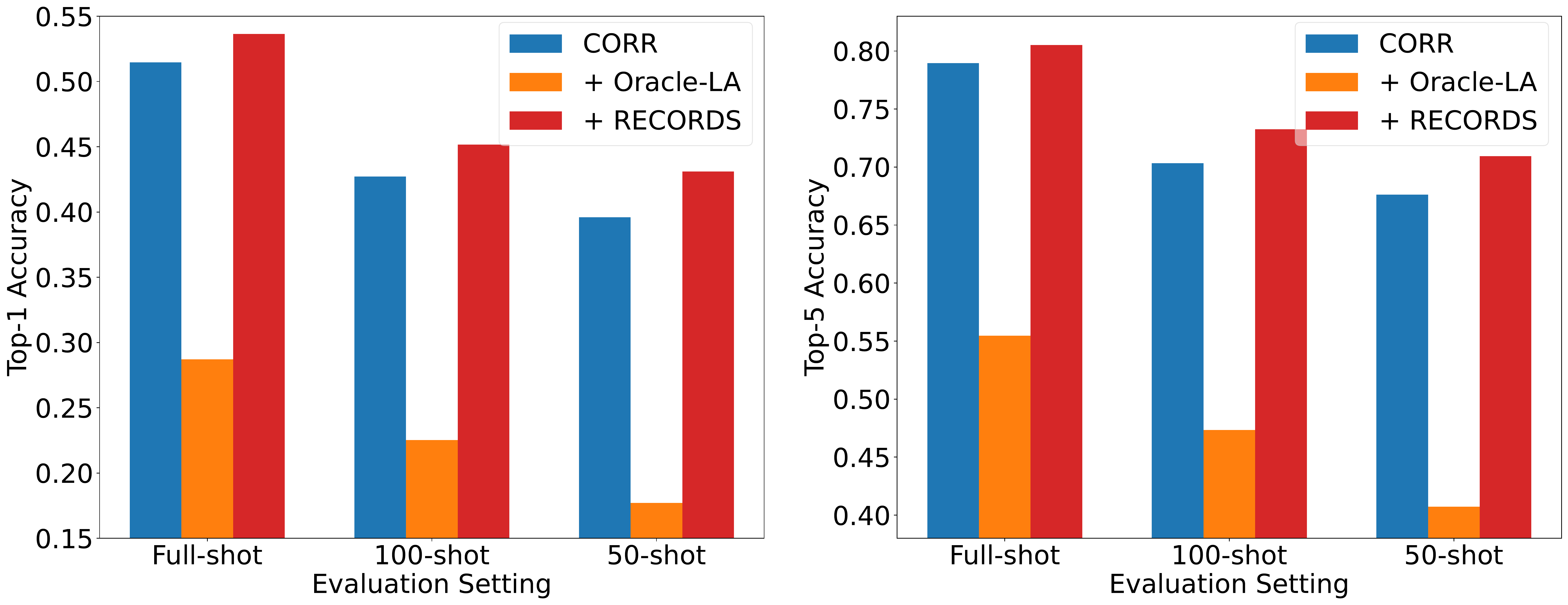}
	\label{fig:linear-probe}
	}
	\hspace*{18pt}
	\subfigure[Ablation on $m$]{
	\includegraphics[width=0.25\textwidth]{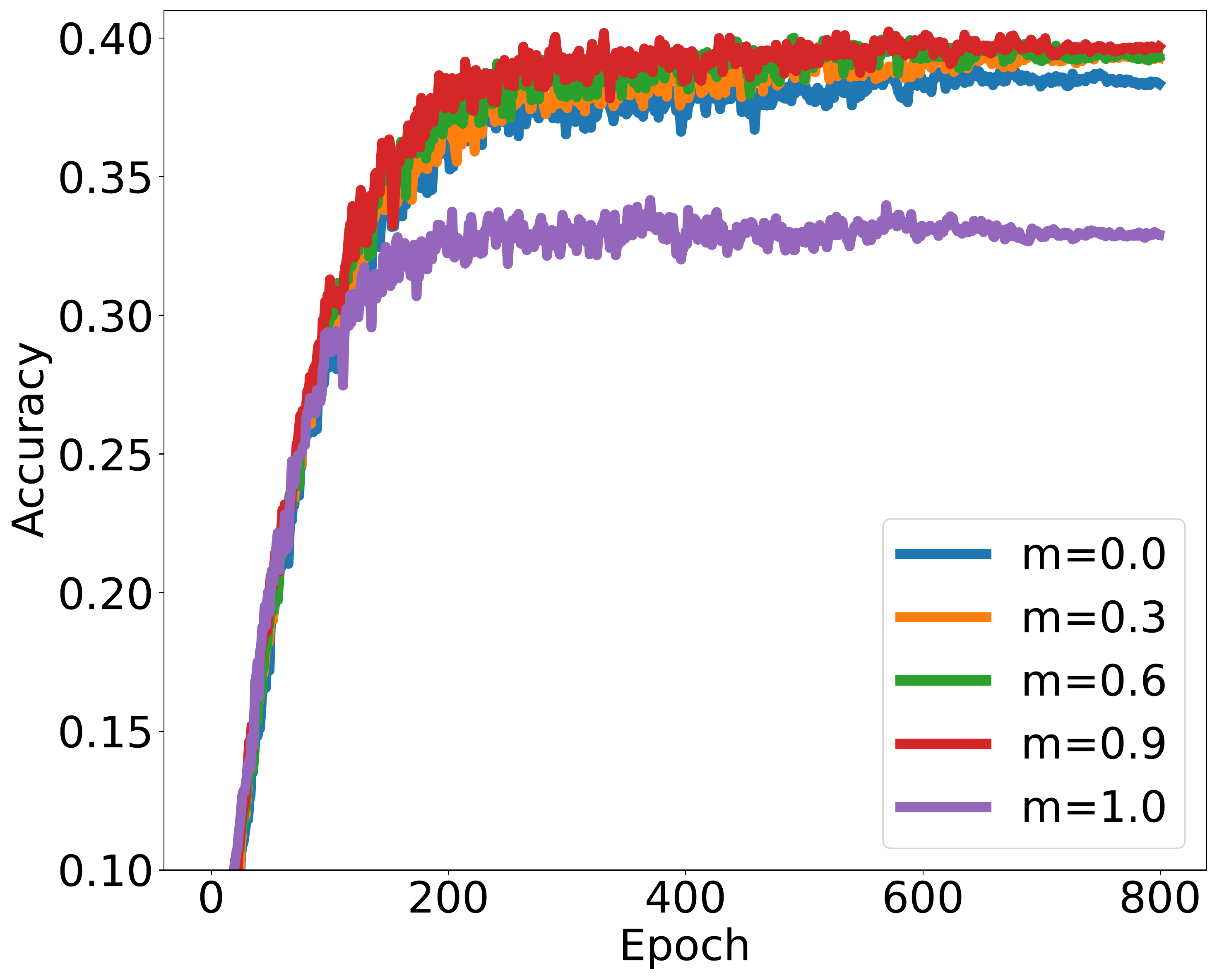}
	\label{fig:ablation_on_m}
	}
	\vspace*{-12pt}
	\caption{(a) Top-1 accuracy (left) and top-5 accuracy (right) under different shots of linear probing for different methods pretrained on CIFAR-100-LT ($\rho = 100$, $q=0.05$). CORR + Oracle-LA post-hoc is same to PRODEN in terms of features and thus is not plotted. (b) Performance of CORR + RECORDS with varying $m$ on CIFAR-100-LT ($\rho = 100$, $q=0.05$).}
	\vspace*{-16pt}
\end{figure}

\textbf{Non-uniform candidates generation.} Real-world annotation ambiguity often occurs among semantically close labels. To evaluate the performance of RECORDS in practical scenarios, we conduct experiments on a more challenging dataset, namely CIFAR-100-LT-NU. To build CIFAR-100-LT-NU, we generate candidates from the ground truth of CIFAR-100-LT using a \emph{Non-uniform} setting. Specifically, labels in the same superclass of the ground truth have a higher probability to be selected into the candidate set, \textit{i.e.,} $\mathbb{P}(\overline{y}\in S|\overline{y}\neq y, D(\overline{y})\neq D(y)) = q, \mathbb{P}(\overline{y}\in S|\overline{y}\neq y, D(\overline{y})= D(y)) = 8q$, where $D(y)$ denotes the superclass to which $y$ belongs. In Table~\ref{tab:results-mmf-H}, we show the fine-grained analysis on CIFAR-100-LT-NU with $\rho = 100$ and $q\in \{0.03,0.05,0.07\}$. Combined with our RECORDS, the best baseline CORR achieves significant gains, demonstrating the robustness of RECORDS in different scenarios. More results are summarized in Appendix~\ref{apx:Hie-all}.

\textbf{Linear probing performance.} Following the literatures of self-supervised learning~\citep{DBLP:conf/icml/ChenK0H20,DBLP:conf/cvpr/He0WXG20}, we conduct linear probing on CIFAR-100-LT with $\rho = 100$ and $q=0.05$ to quantitatively evaluate the representation quantity of different methods. To eliminate the class imbalance effect, the linear classifier is trained on a balanced dataset. From Figure~\ref{fig:linear-probe}, our RECORDS consistently outperforms baselines in this evaluation under different shots. It indicates that our RECORDS that considers the label disambiguation dynamic can help extract better representation.

\textbf{Other dynamic strategies.} To verify the effectiveness of RECORD, we setup two other straightforward dynamic strategies: (1) Temp Oracle-LA: temperature scaling $\mathbb{P}_{train}(y)$ from $0$ to $1$ during training; (2) Epoch RECORDS: use the prediction of the latest epoch to estimate the class distribution and rebalance. In Table~\ref{tab:dynamic-strategy}, we can find that although simper solutions might be effective, RECORDS outperforms them significantly, confirming the effectiveness  of our design.

\textbf{More baselines for imbalanced learning.} We additionally compare our RECORDS with two contrastive-based SOTA long-tailed learning baselines, BCL~\citep{DBLP:conf/cvpr/ZhuWCCJ22} and PaCO~\citep{DBLP:conf/iccv/CuiZ00J21}, and two regularization methods for mitigating imbalance in PLL, LDD~\citep{DBLP:conf/acml/LiuWCZZH21} and SFN~\citep{DBLP:conf/acml/LiuWCZZH21}. Similar to Oracle-LA, we use the oracle class distributions for BCL and PaCO, denoted as Oracle-BCL and Oracle-PaCO. In Table~\ref{tab:add-LT-baseline}, RECORDS still significantly outperforms these methods, showing the effectiveness of dynamic rebalancing.

\textbf{Effect of the momentum coefficient $m$.} In Figure~\ref{fig:ablation_on_m}, we explore the effect of the momentum update factor $m$ on the performance of RECORDS. As can be seen, the best result is achieved at $m = 0.9$ and the performance decreases when $m$ takes smaller values or a larger value. Specially, when $m = 0$, it still maintains a competitive result, showing the robustness of RECORDS. At the other extreme value, \textit{i.e.,} $m = 1.0$, CORR + RECORDS actually degenerates to CORR.

\section{Conclusion}
In this paper, we focus on a more practical LT-PLL scenario and identify several critical challenges in this task based on previous independent research paradigms LT and PLL. To avoid the drawback of their combination, we propose a novel method for LT-PLL, RECORDS, without requiring the prior of the oracle class distribution. Both empirical and theoretical analysis show that our proposed parametric class distribution is asymmetrically approach the static oracle class prior during training and is more friendly to label disambiguation. Our method is orthogonal to existing PLL methods and can be easily plugged into current PLL methods in an end-to-end manner. 
Extensive experiments demonstrate the effectiveness of our proposed RECORDS. In the future, the extension of RECORDS to other weakly supervised learning scenarios can be explored in a more general scope.

\section*{Ethics statement}
This paper does not raise any ethics concerns. This study does not involve any human subjects, practices to data set releases, potentially harmful insights, methodologies and applications, potential conflicts of interest and sponsorship, discrimination/bias/fairness concerns, privacy and security issues, legal compliance, and research integrity issues.

\section*{Reproducibility Statement}
To ensure the reproducibility of experimental results, 
our code is available at \url{https://github.com/MediaBrain-SJTU/RECORDS-LTPLL}.
We provide experimental setups and implementation details in Section~\ref{sec:exp} and Appendix~\ref{appendix:exp}. The proof of Proposition~\ref{Proposition:bias-bound} is given in Appendix~\ref{appendix:proof}.

\section*{Acknowledgments}
This work is supported by STCSM (No. 22511106101, No. 18DZ2270700, No. 21DZ1100100), 111 plan (No. BP0719010), and State Key Laboratory of UHD Video and Audio Production and Presentation.

\bibliography{iclr2023_conference}
\bibliographystyle{iclr2023_conference}

\clearpage

\appendix
  \vbox{%
    \hsize\textwidth
    \linewidth\hsize
    \centering
	{\LARGE Appendix \par}
  }

\section*{Contents}

\begin{itemize}[leftmargin=*]
\item[] Appendix~\ref{appendix:background}: Pracatical Background of Long-Tailed Partial Label Learning.
\item[] Appendix~\ref{app-Preliminaries}: Preliminaries (Section~\ref{sec:preliminaries}).
\item[] Appendix~\ref{appendix:pseudo-code}: Pseudo-Code of RECORDS (Section~\ref{sec:records}).
\item[] Appendix~\ref{appendix:proof}: Proof of Proposition~\ref{Proposition:bias-bound} (Section~\ref{sec:Proposition}).
\item[] Appendix~\ref{appendix:exp}: Detailed Supplement for Experiments (Section~\ref{sec:exp}).
\end{itemize}
\revise{\section{Long-Tailed Partial Label Learning: Background}\label{appendix:background}}
\revise{The remarkable success of deep learning is built on a large amount of labeled data. Data annotation in real-world scenarios often suffers from annotation ambiguity. 
To address annotation ambiguity, partial label learning allows multiple candidate labels to be annotated for each training instance, which can be widely used in web mining~\citep{DBLP:conf/nips/LuoO10}, automatic image annotations~\citep{DBLP:conf/cvpr/ZengXJCGXM13,DBLP:journals/pami/ChenPC18}, ecoinformatics~\citep{DBLP:conf/nips/LiuD12}, and crowdsourcing~\citep{DBLP:journals/tcyb/GongLTYYT18}.}

\revise{For example, a movie clip may contain several characters talking to each other, with some of them appearing in a screenshot. Although we can obtain scripts and dialogues that indicate the names of the characters, we cannot directly confirm the real name of each face in the screenshot (see Figure~\ref{fig:PLL-app}(a)). A similar scenario arises for recognizing faces from news images, where we can obtain the names of the people from the news descriptions but cannot establish a one-to-one correspondence with the face images (see Figure~\ref{fig:PLL-app}(b)). Partial label learning problem also appears in crowdsourcing, where each instance may be given multiple labels by different annotators. However, some labels may be incorrect or biased due to differences in expertise or cultural background of different annotators, so it is necessary to find the most appropriate label for each instance from candidate labels (see Figure~\ref{fig:PLL-app}(c)).}

\revise{Meanwhile, in the real world, the data naturally exhibit a long-tailed distribution. Corresponding to the three examples in Figure~\ref{fig:PLL-app}, the main characters in movies tend to take up most of the time, while the supporting roles appear much less frequently; in sports news, superstars get most of the exposure, while many role players only get few appearances; and the number of different species in nature also shows a clear long-tailed distribution. However, such common long-tailed distribution of real data has been ignored by most existing PLL methods. Further, the invisibility of ground truth in PLL makes it difficult to even manually balance the dataset. Therefore, we focus on the more difficult but common LT-PLL scenario where label ambiguity and category imbalance co-occur.}

\revise{When facing LT-PLL, directly combining the paradigms of partial label learning and long-tailed learning poses some dilemmas. One immediate problem is that the skewed long-tailed distribution exacerbates the bias toward head classes in label disambiguation and tends to lead to trivial solutions with overconfidence in head classes. More importantly, most state-of-the-art long-tailed learning methods cannot be directly used in LT-PLL because they require available class distributions, which are agnostic in PLL due to label ambiguity. Moreover, we find that existing techniques perform poorly in LT-PLL and even fail in some cases even after applying an oracle class distribution prior in the training.}

\begin{figure}[t]
    \centering
    \includegraphics[width=1.0\textwidth]{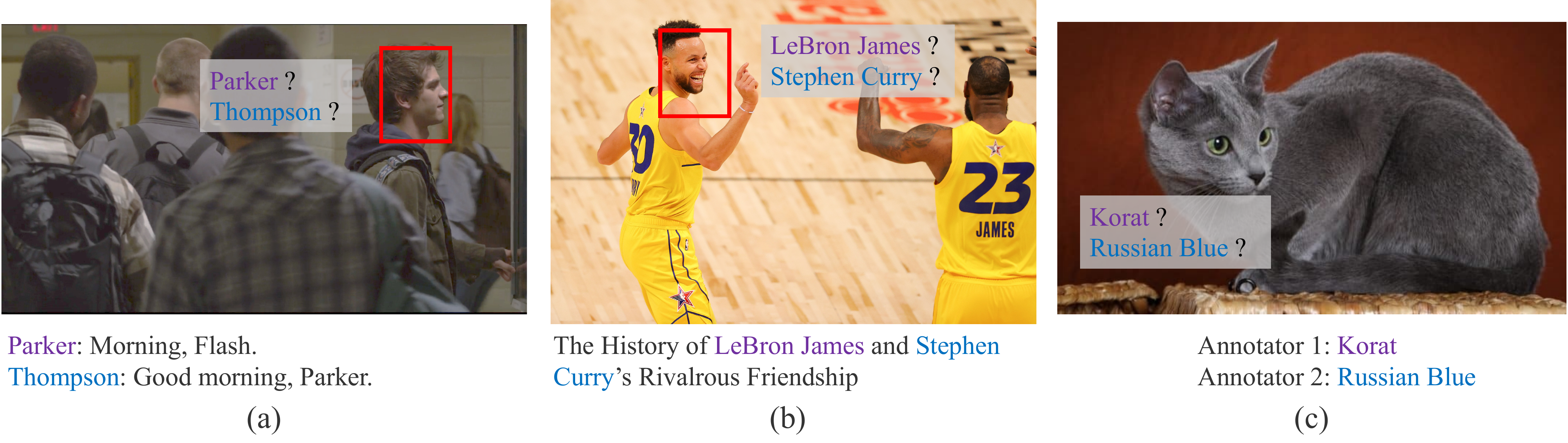}
    \caption{\revise{Some practical applications of PLL. (a) A screenshot of the movie ``The Amazing Spider-Man" and the corresponding dialogue script. (b) An image from NBA news with the caption ``The History of LeBron James and Stephen Curry’s Rivalrous Friendship". (c) an image of ``Russian Blue", which is very similar to ``Korat".}}
    \label{fig:PLL-app}
\end{figure}

\revise{\section{Preliminaries}\label{app-Preliminaries}
\subsection{Self-Training PLL methods}\label{appendix:PLL-baseline}
The self-training PLL methods~\citep{DBLP:conf/nips/FengL0X0G0S20,DBLP:conf/icml/WenCHL0L21,DBLP:conf/icml/LvXF0GS20,zhang2021exploiting} remove annotation ambiguities from model outputs and gradually identify true labels during training, achieving state-of-the-art results. These methods maintain class-wise confidence weights $\bm{w}$ for each sample during training and formulate the loss function as a weighted summation of the classification loss by $\bm{w}$. They update the confidence weights $\bm{w}$ from the output, then use $\bm{w}$ as soft/hard pseudo-labels to learn classifiers from the data. Model training and weight updates are performed iteratively in each round. In the late training period, $w$ converges to stable one-hot labels, and PLL is converted to an ordinary classification problem that learns from the disambiguated labels. The differences among these self-training methods are the form of loss function and how the weights are updated.
In Tabel~\ref{tab:Self-Training-PLL-Methods}, we summarize four state-of-the-art self-training methods, PRODEN~\citep{DBLP:conf/icml/LvXF0GS20}, LW~\citep{DBLP:conf/icml/WenCHL0L21}, CAVL~\citep{zhang2021exploiting}, and CORR~\citep{DBLP:conf/icml/WuWZ22}. Specially, PRODEN assigns higher confidence weights to labels with larger output logit; LW additionally considers the loss of non-partial labels and assigns the corresponding confidence weights to non-partial labels as well; CAVL designs a identification strategy based on the idea of CAM and assigns hard confidence weights to labels; and CORR applies consistency regularization in the disambiguation strategy.}

\begin{table}[!h]
    \caption{The loss functions and confidence weights updating strategies for self-training PLL methods.}
    \label{tab:Self-Training-PLL-Methods}
    \begin{center}
        \scalebox{0.85}{
            \begin{tabular}{l|l|l}
                \toprule
                Methods                 & Per-Sample Loss                                                                & Updating $\bm{w}$                                                                                                                                                                                                                                                                                                    \\
                \midrule\xrowht[()]{18pt}
                \multirow{2}{*}{PRODEN} & \multirow{2}{*}{$\sum_{j \in \mathcal{Y}} {\bm{w}}_{i,j}\ell(z^j(\bm{x}_i))$ } & ${\bm{w}}_{i,j}=\frac{\exp(z^j(\bm{x}_i))}{\sum_{k\in S_i} \exp(z^k(\bm{x}_i))},\  \text{if}\ j \in S_i$                                                                                                                                                                                                             \\
                \xrowht[()]{18pt}
                                        &                                                                                & ${\bm{w}}_{i,j}=0,\  \text{if}\ j \notin S_i$                                                                                                                                                                                                                                                                        \\
                \hline\xrowht[()]{18pt}
                \multirow{2}{*}{LW}     & $\sum_{j \in S_i} {\bm{w}}_{i,j}\ell(z^j(\bm{x}_i))+$                          & ${\bm{w}}_{i,j}=\frac{\exp(z^j(\bm{x}_i))}{\sum_{k\in S_i} \exp(z^k(\bm{x}_i))},\  \text{if}\ j \in S_i$                                                                                                                                                                                                             \\\xrowht[()]{18pt}
                                        & $\beta \sum_{j \notin S_i} {\bm{w}}_{i,j}\ell(-z^j(\bm{x}_i))$                 & $ {\bm{w}}_{i,j}=\frac{\exp(z^j(\bm{x}_i))}{\sum_{k\notin S_i} \exp(z^k(\bm{x}_i))},\  \text{if}\ j \notin S_i$                                                                                                                                                                                                      \\
                \hline\xrowht[()]{18pt}
                \multirow{2}{*}{CAVL}   & \multirow{2}{*}{$\sum_{j \in \mathcal{Y}} {\bm{w}}_{i,j}\ell(z^j(\bm{x}_i))$}  & ${\bm{w}}_{i,j}=1,\  \text{if}\ j=\arg\!\max_{j\in S_i}\left\lvert z^j(\bm{x}_i)-1 \right\rvert z^j(\bm{x}_i)$                                                                                                                                                                                                       \\\xrowht[()]{18pt}
                                        &                                                                                & ${\bm{w}}_{i,j}=0,\  \text{else}$                                                                                                                                                                                                                                                                                    \\
                \hline\xrowht[()]{18pt}
                \multirow{2}{*}{CORR}   & $\lambda\KL(z(\bm{x}_i)\|{\bm{w}}_{i})+$                                       & ${\bm{w}}_{i,j}=\frac{(\prod_{\bm{x}^{\prime} \in A(\bm{x}_i)}\exp(z^j(\bm{x}^{\prime})))^{\frac{1}{\left\lvert A(\bm{x}_i) \right\rvert }}}{\sum_{k\in \mathcal{Y}} (\prod_{\bm{x}^{\prime} \in A(\bm{x}_i)}\exp(z^k(\bm{x}^{\prime})))^{\frac{1}{\left\lvert A(\bm{x}_i) \right\rvert }}},\  \text{if}\ j \in S_i$ \\\xrowht[()]{18pt}
                                        & $\sum_{j \notin S_i}\ell(-z^j(\bm{x}_i))$                                      & $ {\bm{w}}_{i,j}=0,\  \text{if}\ j \notin S_i$                                                                                                                                                                                                                                                                       \\
                \bottomrule
            \end{tabular}
        }
    \end{center}
\end{table}

\revise{\subsection{Logit Adjustment for Long-tailed Learning}}

\revise{Here we give a review on the successful logit adjustment (LA)~\citep{DBLP:conf/iclr/MenonJRJVK21,DBLP:conf/cvpr/HongHCSKC21} in long-tailed learning. LA uses the Bayes Rule to remove the adverse effect of class imbalance by performing a offset term on the output logit. In the long-tailed setting with a highly skewed training category distribution $\mathbb{P}_{train}(y)$, a trivial classifier that classifies all instances as majority labels can achieve high training accuracy. To cope with it, a class-balanced test set $\mathcal{D}_{uni}=\{(\bm{x}_i, y_i)\}$ is used~\citep{DBLP:conf/icpr/BrodersenOSB10}. We donate $\mathbb{P}_{uni}(y|\bm{x})\propto  \mathbb{P}(\bm{x}|y)$ as the underlying class probability under uniform distribution $\mathbb{P}_{uni}(y) = \frac{1}{C}$. The of long-tailed learning is to learn a model on $\mathcal{D}_{train}$ that minimizes the following balanced error rate (BER) on $\mathcal{D}_{uni}$:}
\revise{\begin{equation*}
    \min_{\Theta} \mathrm{BER}(\Theta) 
    \overset{\mathbb{P}_{uni}(y) = \frac{1}{C}}{=} 
    \mathbb{P}_{(\bm{x},y)\in \mathcal{D}_{uni}}(y \neq \arg\!\max_{y^{\prime}\in \mathcal{Y}}z^{y^{\prime}}(\bm{x})).
\end{equation*}}

\revise{For an optimal model trained by minimizing the traditional softmax cross-entropy loss, the output logit $z$ satisfies $\softmax(z^y(\bm{x})) = \mathbb{P}_{train}(y|\bm{x}) \propto \mathbb{P}(\bm{x}|y)\cdot \mathbb{P}_{train}(y)$~\citep{DBLP:conf/eccv/YuLGT18}. Recall that the goal is to minimize BER. For a BER-optimal model, also known as Bayes-optimal model, its output logit $z_{uni}$ satisfies $\softmax(z_{uni}^y(\bm{x}))=\mathbb{P}_{uni}(y|\bm{x})$~\citep{DBLP:conf/icml/MenonNAC13}. LA attempts to convert the output $z$ obtained from traditional cross-entropy training into a Bayes-optimal output as follows:}
\revise{\begin{equation*}
    \begin{split}
        &\softmax(z^y(\bm{x})) = \mathbb{P}_{train}(y|\bm{x}) \propto \mathbb{P}(\bm{x}|y)\cdot \mathbb{P}_{train}(y)\\
        &\mathbb{P}_{uni}(y|\bm{x}) \propto  \mathbb{P}(\bm{x}|y) \propto \softmax(z^y(\bm{x})- \log \mathbb{P}_{train}(y)).
    \end{split}
\end{equation*}}

\revise{That is to say, if we have the logit $z^y(\bm{x})$ by training on standard cross-entropy loss, a Bayes-optimal logit $z_{uni}^y(\bm{x}) = z^y(\bm{x}) - \log \mathbb{P}_{train}(y)$ can be obtained by subtracting an offset term $\log \mathbb{P}_{train}(y)$. Using $z_{uni}^y(\bm{x})$ as the test logit preserves the balanced part $\mathbb{P}(\bm{x}|y)$ of the output and removes the imbalanced part $\mathbb{P}_{train}(y)$ in a statistical sense. A number of recent studies~\citep{DBLP:conf/cvpr/ZhuWCCJ22,DBLP:conf/iccv/CuiZ00J21} have demonstrated the powerful effects of logit adjustment in long-tailed learning.}

\section{Pseudo-Code of RECORDS}\label{appendix:pseudo-code}
We summarize the complete procedure of our RECORDS in Algorithm~\ref{alg}.

\begin{algorithm}[!htb]

	\SetAlgoLined
	\KwIn{Training dataset $\mathcal{D}_{train}$, deep model $f$, classifier $g$, a self-training PLL algorithm $\mathcal{A}$, number of epochs $T$.}
	\KwOut{Parameter $\theta$ for f, parameter $\bm{W}$ for g.}
	Initialize uniform weights $\bm{w}$\;
	Initialize $F$ with $0$\;
	\For{$t=1,2,\ldots,T$}{
		Shufﬂe training set $\mathcal{D}_{train}$ into $B$ mini-batches\;
		\For{$k=1,2,\ldots,B$}{
			Compute output $z$ for mini-batch $\mathcal{D}_k$\;
			Update $F$ according to Equation~\ref{eq:prototype-feature}\;
			Compute loss $L_\mathcal{A}$ according to algorithm $\mathcal{A}$\;
			Compute debiased output $z_{uni}$ according to Equation~\ref{eq:debias}\;
			Update $\bm{w}$ using $z_{uni}$ according to algorithm $\mathcal{A}$\;
			Update $\theta$ and $\bm{W}$ by minimizing $L_\mathcal{A}$\;
		}
	}
	\caption{Our proposed RECORDS.}
	\label{alg}
\end{algorithm}

\section{Proof of Proposition~\ref{Proposition:bias-bound}}
\label{appendix:proof}

\begin{lemma}\label{lemma:L2}
	Let $L_2(h)$ for $h \in H$ be $L_2$ distance between $\mathbb{P}_{train}(y)$ and $\mathbb{P}_{train}(y|\Theta)$, 
	where $h$ is parameterized by $\Theta$.
	We have
	$$ L_2(h)\leq 2\mathbb{E}_{(\bm{x},y)\sim (\mathcal{X},\mathcal{Y})}\mathds{1}(h(\bm{x})\neq y)).$$
\end{lemma}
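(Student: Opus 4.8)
The plan is to prove Lemma~\ref{lemma:L2} by unpacking the definition of the $L_2$ distance, writing each per-class discrepancy $\mathbb{P}_{train}(y_j) - \mathbb{P}_{train}(y_j|\Theta)$ as a single expectation of a difference of indicators, and then bounding the whole thing by the total classification error. First I would write, using the definitions given just before the Proposition,
\begin{equation*}
	\mathbb{P}_{train}(y_j) - \mathbb{P}_{train}(y_j|\Theta) = \mathbb{E}_{(\bm{x},y)\sim(\mathcal{X},\mathcal{Y})}\left[\mathds{1}(y=y_j) - \mathds{1}(h(\bm{x})=y_j)\right],
\end{equation*}
so that by Jensen's inequality $\bigl|\mathbb{P}_{train}(y_j) - \mathbb{P}_{train}(y_j|\Theta)\bigr| \le \mathbb{E}\,\bigl|\mathds{1}(y=y_j) - \mathds{1}(h(\bm{x})=y_j)\bigr|$. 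The key elementary observation is that for fixed $(\bm{x},y)$ the quantity $\bigl|\mathds{1}(y=y_j) - \mathds{1}(h(\bm{x})=y_j)\bigr|$ is nonzero for at most two values of $j$ (namely $j=y$ and $j=h(\bm{x})$), and it is nonzero at all only when $h(\bm{x})\neq y$; in that case it equals $1$ for each of those two indices. Hence $\sum_{j}\bigl|\mathds{1}(y=y_j) - \mathds{1}(h(\bm{x})=y_j)\bigr| = 2\,\mathds{1}(h(\bm{x})\neq y)$ pointwise.

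From here I would take the interpretation of $L_2(h)$ that makes the stated inequality come out cleanly. Using $L_2(h) = \sum_{j}\bigl|\mathbb{P}_{train}(y_j) - \mathbb{P}_{train}(y_j|\Theta)\bigr|$ (the $\ell_1$-style sum of coordinatewise gaps, which is how the downstream argument in the Proposition uses it), we get
\begin{equation*}
	L_2(h) \le \sum_{j}\mathbb{E}_{(\bm{x},y)}\bigl|\mathds{1}(y=y_j) - \mathds{1}(h(\bm{x})=y_j)\bigr| = \mathbb{E}_{(\bm{x},y)}\sum_{j}\bigl|\mathds{1}(y=y_j) - \mathds{1}(h(\bm{x})=y_j)\bigr| = 2\,\mathbb{E}_{(\bm{x},y)}\mathds{1}(h(\bm{x})\neq y),
\end{equation*}
where swapping the finite sum and the expectation is justified by linearity. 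This is exactly the claimed bound. If instead $L_2$ is meant literally as a Euclidean norm, I would first note $\|v\|_2 \le \|v\|_1$ for the vector $v$ of coordinatewise gaps and then apply the same chain; the factor $2$ is still valid and the argument is unchanged.

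I do not anticipate a serious obstacle here — the lemma is essentially a pointwise counting argument combined with linearity of expectation. The only point requiring mild care is fixing precisely which norm/normalization ``$L_2$ distance'' refers to, since the constant $2$ is tailored to the $\ell_1$-sum-of-gaps reading that the Proposition's proof subsequently feeds into the Natarajan-dimension generalization bound; I would state this convention explicitly at the start. A secondary check is that $h$ is deterministic (it is, being a classifier $h_\Theta:\mathcal{X}\to\mathcal{Y}$), so $\mathds{1}(h(\bm{x})=y_j)$ genuinely picks out a single class per input and the ``at most two nonzero terms'' claim holds verbatim.
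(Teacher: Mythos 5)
Your proposal is correct and follows essentially the same route as the paper's proof: bound the $L_2$ (Euclidean) distance by the $\ell_1$ sum of coordinatewise gaps, apply Jensen per coordinate, and use the pointwise observation that at most two indicator differences are nonzero (exactly when $h(\bm{x})\neq y$), yielding the factor $2$. The paper indeed defines $L_2(h)$ as the Euclidean norm and implicitly uses $\lVert v\rVert_2\le\lVert v\rVert_1$ exactly as in your fallback remark, so no further changes are needed.
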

\begin{proof}
	\begin{equation}
		\begin{split}
			L_2(h) &= \sqrt{\sum_{y_j=1}^C\left\lvert \mathbb{E}_{(\bm{x},y)\sim (\mathcal{X},\mathcal{Y})}(\mathds{1}(y=y_j) - \mathds{1}(h(\bm{x})=y_j)) \right\rvert^2} \\
			&\leq \sum_{y_j=1}^C \mathbb{E}_{(\bm{x},y)\sim (\mathcal{X},\mathcal{Y})} \left\lvert \mathds{1}(y=y_j) - \mathds{1}(h(\bm{x})=y_j) \right\rvert
		\end{split}
	\end{equation}

For any instance $(\bm{x},y)$, if $h(\bm{x})=y$, \textit{i.e.,} $h$ correctly classifies $\bm{x}$, then for $\forall y_j\in\{1,2,\ldots,C\}$, we have $\left\lvert \mathds{1}(y=y_j) - \mathds{1}(h(\bm{x})=y_j) \right\rvert =0$. Otherwise if $h(\bm{x})\neq y$, we have
\begin{equation}
	\left\lvert \mathds{1}(y=y_j) - \mathds{1}(h(\bm{x})=y_j) \right\rvert= \begin{cases}
		1,\quad &y_j = y\ \text{or}\ y_j = h(\bm{x}) \\
		0,\quad &else
		\end{cases} 
\end{equation}

Thus, we can bound $L_2(h)$ by 
\begin{equation}
	L_2(h)\leq 2\mathbb{E}_{(\bm{x},y)\sim (\mathcal{X},\mathcal{Y})}\mathds{1}(h(\bm{x})\neq y))
\end{equation}

\end{proof}

\begin{lemma}\label{lemma:R2M}
	Define $R$ as set of all ${\mathcal{D}_{train}}$ for which there exists an $\epsilon$-$L_2$ Distance $h$ with zero empirical risk:
	$R = \{ {\mathcal{D}_{train}}\in (\mathcal{X},\mathcal{Y},\mathcal{S})^N : \exists h, L_2(h)\geq \epsilon, R^{\mathcal{D}_{train}}(h)=0\}$.
	Introduce another training set $\mathcal{D}_{train}^\prime\in (\mathcal{X},\mathcal{Y},\mathcal{S})^{N}$.
	Define $M$ to be set of all pairs $({\mathcal{D}_{train}},\mathcal{D}_{train}^\prime)$ for which there exists a $h$ with zero empirical risk on $\mathcal{D}_{train}$ that makes  $\epsilon$-$L_2$ Distance and make at least $\frac{\epsilon N}{4}$ classification errors on $\mathcal{D}_{train}^\prime$, \textit{i.e.,} $M = \{ ({\mathcal{D}_{train}},\mathcal{D}_{train}^\prime)\in (\mathcal{X},\mathcal{Y},\mathcal{S})^{2N} : \exists h, L_2(h)\geq \epsilon, R^{\mathcal{D}_{train}}(h)=0, \sum_{i=1}^N\mathds{1}(h(\bm{x}^\prime)\neq y^\prime)\geq \frac{\epsilon N}{4}\}$. If $N> \frac{16\ln{2}}{{\epsilon}}$, then we have $$\mathbb{P}({\mathcal{D}_{train}}\in R) < 2\mathbb{P}(({\mathcal{D}_{train}},\mathcal{D}_{train}^\prime)\in M).$$
\end{lemma}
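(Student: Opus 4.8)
\textbf{Proof proposal for Lemma~\ref{lemma:R2M}.}

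The plan is to run the classical ``two-sample'' (symmetrization / ghost-sample) argument used to prove uniform convergence bounds, adapted to the $L_2$-distance functional rather than to the generalization error directly. First I would condition on the event $\mathcal{D}_{train}\in R$: by definition there is some (random, $\mathcal{D}_{train}$-dependent) classifier $h^\star$ with $R^{\mathcal{D}_{train}}(h^\star)=0$ and $L_2(h^\star)\ge\epsilon$. The goal is to show that, conditionally on $\mathcal{D}_{train}\in R$, the fresh sample $\mathcal{D}_{train}^\prime$ (drawn i.i.d.\ from the same distribution, independent of $\mathcal{D}_{train}$) makes $h^\star$ commit at least $\epsilon N/4$ classification errors with probability greater than $1/2$; integrating this over $R$ then gives $\mathbb{P}(M)\ge \tfrac12\mathbb{P}(R)$, which rearranges to the claim.

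The key step is the conditional lower bound on the error count of $h^\star$ on $\mathcal{D}_{train}^\prime$. Here I would invoke Lemma~\ref{lemma:L2}: since $L_2(h^\star)\ge\epsilon$, that lemma gives $\mathbb{E}_{(\bm{x},y)}\mathds{1}(h^\star(\bm{x})\neq y)\ge \epsilon/2$, i.e.\ the true error rate of $h^\star$ is at least $\epsilon/2$. The number of mistakes $h^\star$ makes on the $N$ fresh points $\mathcal{D}_{train}^\prime$ is therefore a sum of $N$ i.i.d.\ Bernoulli variables with mean $p\ge\epsilon/2$, so it has expectation $pN\ge\epsilon N/2$. I want to argue this sum is $\ge\epsilon N/4 = (pN)/2 \cdot (\epsilon/2)/p \le pN/2$ — more carefully, $\epsilon N/4\le pN/2$, so I need the Binomial to exceed half its mean. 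A one-sided Chebyshev (or a multiplicative Chernoff) bound gives $\mathbb{P}(\mathrm{Bin}(N,p) < pN/2) \le \frac{4(1-p)}{pN}\le\frac{4}{pN}\le\frac{8}{\epsilon N}$, and the hypothesis $N>16\ln 2/\epsilon > 16/\epsilon$ forces this to be $<1/2$. Hence conditionally the fresh sample puts us in $M$ (with this particular $h^\star$, which certainly witnesses membership in $M$) with probability $>1/2$.

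Assembling: $\mathbb{P}(M)\ge \mathbb{P}(M,\ \mathcal{D}_{train}\in R) = \mathbb{E}\bigl[\mathds{1}(\mathcal{D}_{train}\in R)\,\mathbb{P}(\mathcal{D}_{train}^\prime\text{ gives }\ge\epsilon N/4\text{ errors for }h^\star \mid \mathcal{D}_{train})\bigr] > \tfrac12\mathbb{P}(\mathcal{D}_{train}\in R)$, which is exactly $\mathbb{P}(\mathcal{D}_{train}\in R) < 2\mathbb{P}(M)$. The one technical subtlety I would be careful about is that $h^\star$ is chosen as a measurable function of $\mathcal{D}_{train}$ and is independent of $\mathcal{D}_{train}^\prime$, so that conditionally on $\mathcal{D}_{train}$ the error count on $\mathcal{D}_{train}^\prime$ really is $\mathrm{Bin}(N,p)$ with $p=p(h^\star)\ge\epsilon/2$ fixed; measurable-selection of $h^\star$ from the (possibly infinite) hypothesis class is the only place where one must be slightly careful, and it is handled exactly as in the standard VC-theory symmetrization lemma. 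I expect the tail bound on the Binomial — choosing the right inequality so that the constant works out against $N>16\ln 2/\epsilon$ — to be the main (though still routine) obstacle; everything else is bookkeeping.
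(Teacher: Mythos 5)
Your proposal follows essentially the same ghost-sample argument as the paper: condition on $\mathcal{D}_{train}\in R$, use Lemma~\ref{lemma:L2} to lower-bound the true error rate of the witness $h^\star$ by $\epsilon/2$, and apply a concentration bound to show that the error count on the independent copy $\mathcal{D}_{train}^\prime$ exceeds $\epsilon N/4$ with conditional probability greater than $1/2$, then integrate over $R$. One correction: your Chebyshev variant does not close under the stated hypothesis, since $16\ln 2/\epsilon < 16/\epsilon$ (the inequality you wrote is backwards because $\ln 2<1$); you need the multiplicative Chernoff bound $\mathbb{P}\left(X<\tfrac{\mu}{2}\right)\le e^{-\mu/8}\le e^{-\epsilon N/16}$, which is $<1/2$ exactly when $N>\tfrac{16\ln 2}{\epsilon}$ --- this is the bound the paper uses and is what makes the constant come out right.
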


\begin{proof}
\revise{This lemma is used in many learnability proofs.} Apply the chain rule of probability:
\begin{equation}
	\begin{split}
		& \mathbb{P}(({\mathcal{D}_{train}},\mathcal{D}_{train}^\prime)\in M) \\
		= &\mathbb{P}(({\mathcal{D}_{train}},\mathcal{D}_{train}^\prime)\in M|{\mathcal{D}_{train}}\in R) \cdot \mathbb{P}({\mathcal{D}_{train}}\in R) \\
		= &\mathbb{P}(\exists h, L_2(h)\geq \epsilon, R^{\mathcal{D}_{train}}(h)=0, \sum_{i=1}^N\mathds{1}(h(\bm{x}^\prime)\neq y^\prime)\geq \frac{\epsilon N}{4} | {\mathcal{D}_{train}}\in R)\cdot \mathbb{P}({\mathcal{D}_{train}}\in R)\\
		\geq &\mathbb{P}(\sum_{i=1}^N\mathds{1}(h(\bm{x}^\prime)\neq y^\prime)\geq \frac{\epsilon N}{4} | L_2(h)\geq \epsilon, R^{\mathcal{D}_{train}}(h)=0)\cdot \mathbb{P}({\mathcal{D}_{train}}\in R)\\
	\end{split}
\end{equation}

Given $L_2(h)\geq \epsilon$, we can get $\mathbb{E}_{(\bm{x},y)\sim (\mathcal{X},\mathcal{Y})}\mathds{1}(h(\bm{x})\neq y)\geq \frac{{\epsilon}}{2}$ (Lemma~\ref{lemma:L2}). Using the Chernoff bound, when $N> \frac{16\ln{2}}{{\epsilon}}$, we can get
\begin{equation}
	\begin{split}
		&\mathbb{P}(\sum_{i=1}^N\mathds{1}(h(\bm{x}^\prime)\neq y^\prime)\geq \frac{\epsilon N}{4})\\
		= &1 -  \mathbb{P}(\sum_{i=1}^N\mathds{1}(h(\bm{x}^\prime)\neq y^\prime)< \frac{\epsilon N}{4})\\
		\geq &1- \mathbb{P}(\sum_{i=1}^N\mathds{1}(h(\bm{x}^\prime)\neq y^\prime)< (1-\frac{1}{2})\mathbb{E}_{(\bm{x}^\prime,y^\prime)\sim (\mathcal{X},\mathcal{Y})} \sum_{i=1}^N\mathds{1}(h(\bm{x}^\prime)\neq y^\prime))\\
		\geq &1 - e^{-\frac{N\mathbb{E}_{(\bm{x},y)\sim (\mathcal{X},\mathcal{Y})}\mathds{1}(h(\bm{x})\neq y)}{8}}\\
		\geq &1 - e^{-\frac{N{\epsilon}}{16}} > \frac{1}{2}
	\end{split}
\end{equation}

Thus we can bound $\mathbb{P}({\mathcal{D}_{train}}\in R)$ by
\begin{equation}
	\mathbb{P}({\mathcal{D}_{train}}\in R) < 2\mathbb{P}(({\mathcal{D}_{train}},\mathcal{D}_{train}^\prime)\in M)
\end{equation}
\end{proof}

\revise{\begin{lemma}\citep{DBLP:conf/icml/LiuD14}
    If the hypothesis space $H$ has Natarajandimension $d_H$ and $\eta < 1$, then 
    $$\mathbb{P}(({\mathcal{D}_{train}},\mathcal{D}_{train}^\prime)\in M)<(2N)^{d_\mathcal{H}}C^{2d_\mathcal{H} }(\frac{1+\eta}{2})^{\frac{{\epsilon}N}{4}}$$.\label{lemma:M-bound}
\end{lemma}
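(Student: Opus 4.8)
The plan is to establish the bound by a classical ghost-sample symmetrization argument, adapted to the partial-label setting so that the ambiguity degree $\eta$ replaces the factor $\tfrac12$ appearing in the fully-supervised Vapnik--Chervonenkis bound. The starting observation is that the event $R^{\mathcal{D}_{train}}(h)=0$ can be relaxed to the weaker necessary condition of \emph{candidate consistency}: since the disambiguated label satisfies $\tilde{y}(\bm{x}_i,S_i)\in S_i$, zero empirical risk forces $h(\bm{x}_i)\in S_i$ for every first-half sample. I would therefore bound $\mathbb{P}((\mathcal{D}_{train},\mathcal{D}_{train}^\prime)\in M)$ by the probability that there exists $h\in H$ that is candidate-consistent on all of $\mathcal{D}_{train}$ and misclassifies at least $\tfrac{\epsilon N}{4}$ points of $\mathcal{D}_{train}^\prime$ (dropping the restriction $L_2(h)\ge\epsilon$ only enlarges the event). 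Because $y\in S$ always holds, a correctly classified sample is automatically consistent, whereas for an error sample the definition of the ambiguity degree gives $\mathbb{P}_{S\mid(\bm{x},y)}(h(\bm{x})\in S)\le\eta$ since $h(\bm{x})\ne y$. This is the only place where $\eta$ enters, and it is the crux of the estimate.

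Next I would run the symmetrization over the $N$ paired draws $\{(u_j,v_j)\}_{j=1}^N$ with $u_j\in\mathcal{D}_{train}$ and $v_j\in\mathcal{D}_{train}^\prime$. Conditioning on the $2N$ feature--label pairs $(\bm{x}_i,y_i)$ but keeping the candidate sets $S_i$ and the independent swaps $\sigma_j\in\{0,1\}$ random, I fix a hypothesis \emph{behavior} on the $2N$ points, which freezes which samples are errors. Each pair then falls into one of three classes according to whether it contains $0$, $1$, or $2$ error samples; let $n_1$ and $n_2$ denote the numbers of one-error and two-error pairs. A short case analysis of the swap yields, per pair, the joint law of ``front sample consistent'' and ``back sample is an error'': a one-error pair places an error in the back half with probability $\tfrac12$ (the correct sample stays in front and is automatically consistent) and otherwise remains consistent only if the front error sample lands in its candidate set, an event of probability at most $\eta$; a two-error pair always places an error in the back half and stays consistent with probability at most $\eta$.

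Multiplying the independent per-pair contributions, the probability that all front samples are consistent and that the back half carries at least $\tfrac{\epsilon N}{4}$ errors becomes at most $\eta^{n_2}(\tfrac12)^{n_1}\sum_{r\ge k}\binom{n_1}{r}\eta^{\,n_1-r}$, where $k$ is the residual number of back-half errors the one-error pairs must supply. The decisive simplification is to extend the partial binomial sum to a full one, $\sum_{r\ge k}\binom{n_1}{r}\eta^{\,n_1-r}\le(1+\eta)^{n_1}$, which collapses the expression to $\eta^{n_2}(\tfrac{1+\eta}{2})^{n_1}$. Using $\eta\le\tfrac{1+\eta}{2}<1$ together with the fact that a positive-probability event requires $n_1+n_2\ge\tfrac{\epsilon N}{4}$ (otherwise $\tfrac{\epsilon N}{4}$ back-half errors are unattainable), this is at most $(\tfrac{1+\eta}{2})^{n_1+n_2}\le(\tfrac{1+\eta}{2})^{\epsilon N/4}$, uniformly over the fixed behavior.

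Finally I would take the union bound over distinct behaviors. The number of labelings that $H$ can realize on $2N$ points is the Natarajan growth function, bounded via Natarajan's lemma by $(2N)^{d_H}C^{2d_H}$; since this is a deterministic cap, it survives the outer expectation over the $(\bm{x}_i,y_i)$. Combining it with the per-behavior estimate gives $\mathbb{P}((\mathcal{D}_{train},\mathcal{D}_{train}^\prime)\in M)<(2N)^{d_H}C^{2d_H}(\tfrac{1+\eta}{2})^{\epsilon N/4}$, as claimed. I expect the main obstacle to be the third step: the candidate-consistency constraint on the front half and the error-count constraint on the back half are coupled through the same swap, so the per-pair probabilities do not factor directly into a single power of $\tfrac{1+\eta}{2}$; the binomial-sum extension together with the monotonicity $\eta\le\tfrac{1+\eta}{2}$ is precisely what removes the spurious combinatorial factor and leaves the clean exponential rate.
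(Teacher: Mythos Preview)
Your proposal is correct and follows essentially the same argument as the paper: relax $R^{\mathcal{D}_{train}}(h)=0$ to candidate consistency $h(\bm{x}_i)\in S_i$, average over the swap group, split pairs by their error count, collapse the partial binomial sum to $(1+\eta)^{n_1}$, and union-bound via Natarajan's growth-function estimate $(2N)^{d_H}C^{2d_H}$. Your $(n_1,n_2)$ are exactly the paper's $(b_2,b_1)$, and the intermediate expression $\eta^{n_2}\bigl(\tfrac{1+\eta}{2}\bigr)^{n_1}$ together with the maximization under $n_1+n_2\ge\tfrac{\epsilon N}{4}$ is identical to what the paper obtains before concluding.
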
}

\begin{proof}
    Expand ${\mathcal{D}_{train}}$ and $\mathcal{D}_{train}^\prime$ as ${\mathcal{D}_{train}}=(\bm{x}^N,y^N,S^N)\in(\mathcal{X},\mathcal{Y},\mathcal{S})^{N},\mathcal{D}_{train}^\prime=(\bm{x}^{\prime N},y^{\prime N},S^{\prime N})\in (\mathcal{X},\mathcal{Y},\mathcal{S})^{N}$. We can get
\begin{equation}\label{eq:proof1}
	\begin{split}
		\mathbb{P}(({\mathcal{D}_{train}},\mathcal{D}_{train}^\prime)\in M) &=\mathbb{E}\ \mathbb{P}(({\mathcal{D}_{train}},\mathcal{D}_{train}^\prime)\in M|\bm{x}^N, y^N, \bm{x}^{\prime N}, y^{\prime N})\\
	\end{split}
\end{equation}
Where the expectation $\mathbb{E}$ is taken with respect to $(\bm{x}^N, y^N, \bm{x}^{\prime N}, y^{\prime N})$ and the probability $\mathbb{P}$ comes from $(S^N, S^{\prime N})$ given $(\bm{x}^N, y^N, \bm{x}^{\prime N}, y^{\prime N})$.

Let $H|(\bm{x}^N,\bm{x}^{\prime N})$ be the set of hypothesis making different classifications for ${\mathcal{D}_{train}}$ and $\mathcal{D}_{train}^\prime$, we can get
\begin{equation}
	\begin{split}
	&\mathbb{P}(({\mathcal{D}_{train}},\mathcal{D}_{train}^\prime)\in M|\bm{x}^N, y^N, \bm{x}^{\prime N}, y^{\prime N}) \\
	\leq &\sum_{h \in H|(\bm{x}^N,\bm{x}^{\prime N})} \mathbb{P}(R^{\mathcal{D}_{train}}(h)=0, \sum_{i=1}^N\mathds{1}(h(\bm{x}^\prime)\neq y^\prime)\geq \frac{\epsilon N}{4}|\bm{x}^N, y^N, \bm{x}^{\prime N}, y^{\prime N})
\end{split}
\end{equation}

\begin{equation}
	\begin{split}
	&\mathbb{P}(({\mathcal{D}_{train}},\mathcal{D}_{train}^\prime)\in M|\bm{x}^N, y^N, \bm{x}^{\prime N}, y^{\prime N}) \\
	\leq &\sum_{h \in H|(\bm{x}^N,\bm{x}^{\prime N})} \mathbb{P}(R^{\mathcal{D}_{train}}(h)=0, \sum_{i=1}^N\mathds{1}(h(\bm{x}^\prime)\neq y^\prime)\geq \frac{\epsilon N}{4}|\bm{x}^N, y^N, \bm{x}^{\prime N}, y^{\prime N})
\end{split}
\end{equation}

Randomly match the instances in ${\mathcal{D}_{train}}$ and $\mathcal{D}_{train}^\prime$ into $n$ pairs. Following \cite{DBLP:conf/icml/LiuD14}, we define a group $G$ of swaps. A swap $\sigma\in G$ swaps the instances in pairs indexed by $J_\sigma \subseteq \{1,2,\ldots,n\}$. $\left\lvert G\right\rvert =2^N,\sigma({\mathcal{D}_{train}},\mathcal{D}_{train}^\prime) = (\mathcal{D}_{train}^\sigma,\mathcal{D}_{train}^{\prime\sigma})$. Let $a_1,a_2$ be the number for which $h$ classifies ${\mathcal{D}_{train}}$ incorrectly, and $\mathcal{D}_{train}^\prime$ incorrectly. Let $b_1,b_2$ be the number for which $h$ classifies both incorrectly, and only one incorrectly.

\begin{equation}
	\begin{split}
		&\mathbb{P}(R^{\mathcal{D}_{train}}(h)=0, \sum_{i=1}^N\mathds{1}(h(\bm{x}^\prime)\neq y^\prime)\geq \frac{\epsilon N}{4}|\bm{x}^N, y^N, \bm{x}^{\prime N}, y^{\prime N}) \\
		= &\mathds{1}(\sum_{i=1}^N\mathds{1}(h(\bm{x}^\prime)\neq y^\prime)\geq \frac{\epsilon N}{4})\cdot \prod_{i=1}^N\mathbb{P}(h(\bm{x}_i)=\tilde{y}(\bm{x}_i,S_i)|\bm{x}^N, y^N, \bm{x}^{\prime N}, y^{\prime N})\\
		= & \mathds{1}(a_2\geq \frac{{\epsilon}N}{4})\cdot \prod_{i=1}^N\mathbb{P}(h(\bm{x}_i)=\tilde{y}(\bm{x}_i,S_i)|\bm{x}^N, y^N, \bm{x}^{\prime N}, y^{\prime N})
	\end{split}
\end{equation}

We then give the bound for $\prod_{i=1}^N\mathbb{P}(h(\bm{x}_i)=\tilde{y}(\bm{x}_i,S_i)|\bm{x}^N, y^N, \bm{x}^{\prime N}, y^{\prime N})$.
\begin{equation}\label{eq:proof2}
	\begin{split}
		\prod_{i=1}^N\mathbb{P}(h(\bm{x}_i)=\tilde{y}(\bm{x}_i,S_i)|\bm{x}^N, y^N, \bm{x}^{\prime N}, y^{\prime N})\leq \prod_{i=1}^N\mathbb{P}(h(\bm{x}_i)\in S_i|\bm{x}^N, y^N, \bm{x}^{\prime N}, y^{\prime N}) \leq \eta^{a_1} 
	\end{split}
\end{equation}

Combining Equation~\ref{eq:proof1}-\ref{eq:proof2}, we can get
\begin{equation}
	\begin{split}
		&\mathbb{P}(({\mathcal{D}_{train}},\mathcal{D}_{train}^\prime)\in M) \\
		=& \mathbb{E}\ \mathbb{P}(({\mathcal{D}_{train}},\mathcal{D}_{train}^\prime)\in M|\bm{x}^N, y^N, \bm{x}^{\prime N}, y^{\prime N}) \\
		= &\mathbb{E}\ \frac{1}{{2}^{N}}\sum_{\sigma \in G} \mathbb{P}(\sigma({\mathcal{D}_{train}},\mathcal{D}_{train}^\prime)\in M|\bm{x}^N, y^N, \bm{x}^{\prime N}, y^{\prime N})\\ 
		\leq &\mathbb{E}\ \frac{1}{{2}^{N}}  \sum_{\sigma \in G} \sum_{h \in H|\sigma(x,\bm{x}^\prime)}\mathbb{P}(R^{\mathcal{D}_{train}}(h)=0, \sum_{i=1}^N\mathds{1}(h(\bm{x}^\prime)\neq y^\prime)\geq \frac{\epsilon N}{4}|\bm{x}^N, y^N, \bm{x}^{\prime N}, y^{\prime N})\\
		\leq &\mathbb{E}\ (2N)^{d_\mathcal{H}}C^{2d_\mathcal{H} }\frac{1}{{2}^{N}} \sum_{\sigma \in G}\mathds{1}(a_2^\sigma\geq \frac{{\epsilon}N}{4})\eta^{a_1^\sigma} \\
		\leq  &\mathbb{E}\ (2N)^{d_\mathcal{H}}C^{2d_\mathcal{H} }\frac{1}{{2}^{N}} \sum_{a_1^\sigma=b_1}^{b_1+b_2}\mathds{1}(b_1+b_2\geq \frac{{\epsilon}N}{4})C_{b_2}^{a_1^\sigma-b_1}2^{n-b_2} \eta^{a_1^\sigma} \\
		= &\mathbb{E}\ (2N)^{d_\mathcal{H}}C^{2d_\mathcal{H} }\mathds{1}(b_1+b_2\geq \frac{{\epsilon}N}{4}) \eta^{b_1}(\frac{1+\eta}{2})^{b_2}\\
	\end{split}
\end{equation}
where the third line uses the inequality $\left\lvert H|(\bm{x}^N,\bm{x}^{\prime N})\right\rvert \leq (2N)^{d_\mathcal{H}}C^{2d_\mathcal{H} }$~\citep{DBLP:journals/ml/Natarajan89}. When $b_1=0,b_2=\frac{{\epsilon}N}{4}$, the right side reaches the maximum of $(2N)^{d_\mathcal{H}}C^{2d_\mathcal{H} }(\frac{1+\eta}{2})^{\frac{{\epsilon}N}{4}}$.
\end{proof}

For ease of reading, here we restate Proposition~\ref{Proposition:bias-bound} in the main text.
\begin{Proposition}
	Let $\eta = \sup_{(\bm{x},y)\in\mathcal{X}\times\mathcal{Y}, y_j\in\mathcal{Y}, y_j\neq y} \mathbb{P}_{S|(\bm{x},y)}(y_j\in S)$ denote the ambiguity degree, $d_H$ be the Natarajan dimension of the hypothesis space $H$, 
	$\tilde{h} = h_{\tilde{\Theta}}$ be the optimal classifier on the basis of the label disambiguation, where $\tilde{\Theta} =\argmin_{\Theta}R^{\mathcal{D}_{train}}({\Theta})$.
	If the small ambiguity degree condition~\citep{cour2011learning,DBLP:conf/icml/LiuD14}) satisfies, namely, $\eta\in[0,1)$, then for $\forall \delta>0$,  the $L_2$ distance between $\mathbb{P}_{train}(y)$ and $\mathbb{P}_{train}(y|\tilde{\Theta})$ for $\tilde{h}$ is bounded as 
	$$L_2({\tilde{h}})<\frac{4}{(\ln 2 - \ln (1+\eta))N}(d_H(\ln 2N +2\ln C)-\ln\delta+\ln 2)$$ 
	with probability at least $1-\delta$, where $N$ is the sample number and $C$ is the category number.
\end{Proposition}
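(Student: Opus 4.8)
The plan is to chain together the three lemmas already established in the appendix. Lemma~\ref{lemma:L2} converts an $L_2$-distance bound into a classification-error bound, Lemma~\ref{lemma:R2M} performs a symmetrization (double-sample) argument, and Lemma~\ref{lemma:M-bound} bounds the probability of the bad event $M$ using the Natarajan dimension and the ambiguity degree $\eta$. So the structure is: fix $\epsilon>0$, define the bad event $R=\{\mathcal{D}_{train}: \exists h,\ L_2(h)\geq\epsilon,\ R^{\mathcal{D}_{train}}(h)=0\}$, and observe that since $\tilde h=h_{\tilde\Theta}$ minimizes the empirical risk, under the small-ambiguity condition there is a classifier with zero empirical risk on the disambiguated labels (namely the Bayes classifier, whose predictions always lie in the candidate set), so $R^{\mathcal{D}_{train}}(\tilde h)=0$. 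Hence if $L_2(\tilde h)\geq\epsilon$ then $\mathcal{D}_{train}\in R$, and it suffices to bound $\mathbb{P}(\mathcal{D}_{train}\in R)$.

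First I would invoke Lemma~\ref{lemma:R2M} to get $\mathbb{P}(\mathcal{D}_{train}\in R)<2\,\mathbb{P}((\mathcal{D}_{train},\mathcal{D}_{train}^\prime)\in M)$, valid once $N>\tfrac{16\ln 2}{\epsilon}$. Then I would apply Lemma~\ref{lemma:M-bound} to obtain
$$\mathbb{P}(\mathcal{D}_{train}\in R)<2\,(2N)^{d_H}C^{2d_H}\Bigl(\tfrac{1+\eta}{2}\Bigr)^{\epsilon N/4}.$$
Next I would set the right-hand side equal to $\delta$ and solve for $\epsilon$. Taking logarithms,
$$\ln 2 + d_H\ln(2N) + 2d_H\ln C + \tfrac{\epsilon N}{4}\ln\!\Bigl(\tfrac{1+\eta}{2}\Bigr) = \ln\delta,$$
and since $\ln\frac{1+\eta}{2} = \ln(1+\eta)-\ln 2 = -(\ln 2 - \ln(1+\eta))<0$ when $\eta\in[0,1)$, rearranging yields
$$\epsilon = \frac{4}{(\ln 2 - \ln(1+\eta))N}\bigl(d_H(\ln 2N + 2\ln C) - \ln\delta + \ln 2\bigr),$$
which is exactly the claimed bound. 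So with probability at least $1-\delta$ we have $L_2(\tilde h)<\epsilon$, i.e. the stated inequality. One should also check that this choice of $\epsilon$ is consistent with the side condition $N>\tfrac{16\ln2}{\epsilon}$ from Lemma~\ref{lemma:R2M}, or note that if the condition fails the claimed bound exceeds $1$ (hence is vacuous, since $L_2\leq\sqrt 2$), so the statement holds trivially in that regime.

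\textbf{Main obstacle.} The routine part is the algebra of inverting the tail bound. The point that needs the most care is the reduction step at the very beginning: arguing that $R^{\mathcal{D}_{train}}(\tilde h)=0$, i.e. that the empirical risk minimizer on disambiguated labels achieves zero risk. This relies on the realizability/small-ambiguity assumption $\eta<1$ together with the fact that the disambiguated label $\tilde y(\bm x,S)$ is chosen from $S$ and the true label lies in $S$, so the Bayes-optimal $h$ (which, under a deterministic labeling assumption, predicts the true label) is always consistent with \emph{some} admissible disambiguation; this is precisely the hypothesis baked into Lemma~\ref{lemma:R2M} and Lemma~\ref{lemma:M-bound} via the bound $\prod_i \mathbb{P}(h(\bm x_i)\in S_i\mid\cdots)\leq\eta^{a_1}$, so I would just make explicit that $\tilde h$ inherits zero empirical risk. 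A secondary subtlety is making sure the union/symmetrization bookkeeping in the lemmas (the factor $2$, the threshold $\epsilon N/4$, the Natarajan growth function $(2N)^{d_H}C^{2d_H}$) is applied with the right constants, but since those lemmas are proved in full above, here it is just a matter of plugging in.
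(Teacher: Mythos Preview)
Your proposal is correct and follows essentially the same route as the paper: apply Lemma~\ref{lemma:R2M} and then Lemma~\ref{lemma:M-bound} to obtain $\mathbb{P}(\mathcal{D}_{train}\in R)<2(2N)^{d_H}C^{2d_H}\bigl(\tfrac{1+\eta}{2}\bigr)^{\epsilon N/4}$, set this below $\delta$, and invert for $\epsilon$. The paper's own proof in fact glosses over both points you flag as needing care---the reduction to $R^{\mathcal{D}_{train}}(\tilde h)=0$ and the side condition $N>16\ln 2/\epsilon$ from Lemma~\ref{lemma:R2M}---so your treatment is already slightly more thorough than what appears there.
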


\begin{proof}
Recall the definition of $R$:
\begin{equation}
	R = \{ {\mathcal{D}_{train}}\in (\mathcal{X},\mathcal{Y},\mathcal{S})^N : \exists h, L_2(h)\geq \epsilon, R^{\mathcal{D}_{train}}(h)=0\}
\end{equation}

Here we need to prove the sufficient condition for $\mathbb{P}({\mathcal{D}_{train}}\in R)\leq \delta$. Lemma~\ref{lemma:R2M} bounds $\mathbb{P}({\mathcal{D}_{train}}\in R)$ by $\mathbb{P}({\mathcal{D}_{train}}\in R) < 2\mathbb{P}(({\mathcal{D}_{train}},\mathcal{D}_{train}^\prime)\in M)$.
\revise{Lemma~\ref{lemma:M-bound} bounds $\mathbb{P}(({\mathcal{D}_{train}},\mathcal{D}_{train}^\prime)\in M)$ by $(2N)^{d_\mathcal{H}}C^{2d_\mathcal{H} }(\frac{1+\eta}{2})^{\frac{{\epsilon}N}{4}}$.}
Taking it into Lemma~\ref{lemma:R2M}, we get
\begin{equation}
	\mathbb{P}({\mathcal{D}_{train}}\in R) < 2\mathbb{P}(({\mathcal{D}_{train}},\mathcal{D}_{train}^\prime)\in M) \leq 2(2N)^{d_\mathcal{H}}C^{2d_\mathcal{H} }(\frac{1+\eta}{2})^{\frac{{\epsilon}N}{4}}
\end{equation}

Then we can prove the sufficient condition for $\mathbb{P}({\mathcal{D}_{train}}\in R)\leq \delta$:
\begin{equation}
	\begin{split}
		\mathbb{P}({\mathcal{D}_{train}}\in R) \leq \delta&\  \Longleftarrow \ \ \ln 2+d_\mathcal{H}(\ln 2+\ln N)+2d_\mathcal{H}\ln C-\frac{{\epsilon}N}{4}\ln (\frac{1+\eta}{2}) \leq \ln \delta\\
		&\iff \epsilon \geq \frac{4}{(\ln 2 - \ln (1+\eta))N}(d_H(\ln 2N + 2\ln C)-\ln\delta+\ln 2)
	\end{split}
\end{equation}

That is, when $\epsilon = \frac{4}{(\ln 2 - \ln (1+\eta))N}(d_H(\ln 2N + 2\ln C)-\ln\delta+\ln 2)$, we have $L_2\left({\tilde{h}}\right)<\epsilon$ with probability at least $1-\delta$.

\end{proof}

\begin{table}[ht]
	\vspace*{-5pt}
	\caption{Characteristics of PASCAL VOC}
	\label{tab:app-voc1}
	\centering
	\begin{tabular}{c|c|c|c|c}
	\toprule
	\#Classes & \#Train & \#Test & Imbalance ratio & Avg \#Candidate Labels\\
	\midrule
	20            & 11706                  & 4000               & 118.8           & 2.46                       \\  
	\bottomrule         
\end{tabular}
\end{table}

\begin{table}[ht]
	\vspace*{-3pt}
	\caption{Sample size for each class of PASCAL VOC}
	\label{tab:app-voc2}
	\centering
	\resizebox{1\textwidth}{!}{
	\begin{tabular}{c|cccccccccc}
	\toprule
	Class&person    & chair & car & bottle & sofa & bicycle & horse & pottedplant & diningtable & motorbike \\
	Number&5702      & 1285  & 884 & 452    & 364  & 354     & 333   & 319         & 304         & 300       \\
	\midrule
	Class&tvmonitor & dog   & bus & cow    & boat & bird    & cat   & train       & sheep       & aeroplane \\
	Number&277       & 251   & 193 & 134    & 125  & 117     & 111   & 87          & 66          & 48       \\
	\bottomrule
\end{tabular}
	}
\end{table}

\section{Additional Experimental Setup and Results}\label{appendix:exp}
\subsection{Datasets}\label{appendix:dataset}
\textbf{CIFAR-10-LT and CIFAR-100-LT.} The original versions of CIFAR-10 and CIFAR-100 contain 50,000 training images and 10,000 validation images in $32\times32$ size with 10 and 100 categories, respectively. The 100 categories in CIFAR-100 form 20 superclasses, each with 5 classes. Following \cite{DBLP:conf/cvpr/0002MZWGY19}, we conduct the long-tailed version of CIFAR-10 and CIFAR-100 by an exponential decay across sample sizes of different classes, namely CIFAR-10-LT and CIFAR-100-LT. The imbalance rate $\rho$ denotes the ratio of the sample sizes of the most frequent and least frequent classes, \textit{i.e.,} $\rho = \frac{N_C}{N_1}$, where the sample number $N_c$ of each class $c\in\mathcal{Y}$ is in the descending order. 

\textit{Candidate label sets generation.} Following \cite{DBLP:conf/icml/LvXF0GS20,DBLP:conf/icml/WenCHL0L21}, we adopt the \emph{uniform} setting in CIFAR-10-LT and CIFAR-100-LT, \textit{i.e.,} $\mathbb{P}(\overline{y}\in S|\overline{y}\neq y) = q$ to generate candidate label set of each sample. That is, all the $C-1$ negative labels have the same probability of $q$ to be selected into the candidate set along with the ground truth. 
Besides, for CIFAR-100-LT, we also use a more challenging \emph{non-uniform} setting where other labels in the same superclass of the ground truth have a higher probability to be selected into the candidate set, \textit{i.e.,} $\mathbb{P}(\overline{y}\in S|\overline{y}\neq y, D(\overline{y})\neq D(y)) = q, \mathbb{P}(\overline{y}\in S|\overline{y}\neq y, D(\overline{y})= D(y)) = 8q$, where $D(y)$ denotes the superclass to which $y$ belongs. The non-uniform setting is more practical and challenging for the LT-PLL algorithms because the semantics of the categories in the same superclass are closer. We refer to CIFAR-100-LT with candidates generated by the non-uniform setting as CIFAR-100-LT-NU.

\textit{Dataset partitioning.} To demonstrate the performance of the algorithm on categories with different frequencies, we partition the dataset according to the sample size. Following \cite{DBLP:conf/iclr/KangXRYGFK20}, We split the dataset into three partitions: Many-shot (classes with more than 100 images), Medium-shot (classes with 20-100 images), and Few-shot (classes with less than 20 images).

\textbf{PASCAL VOC.} We conduct the real-world LT-PLL dataset PASCAL VOC from PASCAL VOC 2007~\citep{pascal-voc-2007}. Specifically, we crop objects in images as instances and all objects appearing in the same original image are regarded as the labels of a candidate set. Note that, here we are the first to conduct experiments based on deep models on real-world datasets, while previous PLL real-world datasets are tabular and only suitable for linear models or shallow MLPs~\citep{zhang2021exploiting}. Empirically we can observe the significant class imbalance as shown in Table~\ref{tab:app-voc2}. We manually balanced the test set for fairness of the evaluation. In Table~\ref{tab:app-voc1}, we summarize the basic characteristics of PASCAL VOC.
\begin{table}[t!]
	\caption{Top-1 accuracy on CIFAR-100-LT-NU with imbalance ratio $\rho \in\{50,100\}$ and ambiguity $q\in\{0.03,0.05,0.07\}$. Bold indicates superior results.}
	\label{tab:results-hierarchical-main}
	\centering
	\resizebox{0.6\textwidth}{!}{
	\begin{tabular}{l|ccc|ccc}
	\toprule
	Imbalance ratio $\rho$ & \multicolumn{3}{c|}{50} & \multicolumn{3}{c}{100} \\
	\midrule
	ambiguity $q$          & 0.03   & 0.05  & 0.07  & 0.03   & 0.05   & 0.07  \\
	\midrule
	CORR                   & 40.73  & 35.76 & 32.21 & 36.39  & 32.17  & 28.56 \\
	+ Oracle-LA post-hoc   & 45.57  & 39.07 & 34.75 & 39.95  & 35.38  & 31.44 \\
	+ Oracle-LA     & 16.52  & 10.25 & 8.61  & 8.56   & 5.85   & 5.92  \\
	+ RECORDS               & \textbf{46.98} & \textbf{44.93} & \textbf{43.37} & \textbf{40.93} & \textbf{39.36} & \textbf{37.06} \\
	vs. CORR               & \up{6.25}   & \up{9.17}  & \up{11.16} & \up{4.54}   & \up{7.19}   & \up{8.50}   \\
	\midrule
	PRODEN                 & 38.00  & 33.03 & 28.29 & 32.97  & 29.98  & 26.45 \\
	+ Oracle-LA post-hoc   & 42.69  & 35.86 & 30.42 & 36.83  & 32.72  & 29.07 \\
	+ Oracle-LA     & 11.44  & 7.28  & 6.50  & 6.65   & 5.23   & 4.96  \\
	+ RECORDS               & \textbf{43.83} & \textbf{41.39} & \textbf{39.88} & \textbf{37.99} & \textbf{35.76} & \textbf{34.19} \\
	vs. PRODEN             & \up{5.83}   & \up{8.36}  & \up{11.59} & \up{5.02}   & \up{5.78}   & \up{7.74}  \\
	\midrule
	LW                     & 33.76  & 28.73 & 26.11 & 30.10  & 25.56  & 22.58 \\
	+ Oracle-LA post-hoc   & 33.53  & 28.39 & 25.83 & 29.31  & 24.56  & 21.84 \\
	+ Oracle-LA     & 28.22  & 12.13 & 11.11 & 14.36  & 7.73   & 6.85  \\
	+ RECORDS               & \textbf{34.93} & \textbf{30.78} & \textbf{27.79} & \textbf{31.51} & \textbf{26.65} & \textbf{25.06} \\
	vs. LW                 & \up{1.17}   & \up{2.05}  & \up{1.68}  & \up{1.41}   & \up{1.09}   & \up{2.48}  \\
	\midrule
	CAVL                   & 27.30  & 20.85 & 13.18 & 27.26  & 20.53  & 7.34  \\
	+ Oracle-LA post-hoc   & 20.56  & 14.96 & 9.88  & 20.52  & 13.56  & 4.93  \\
	+ Oracle-LA     & 11.66  & 7.11  & 6.47  & 6.86   & 5.16   & 4.80  \\
	+ RECORDS               & \textbf{39.71} & \textbf{31.02} & \textbf{18.42} & \textbf{34.77} & \textbf{26.36} & \textbf{20.45} \\
	vs. CAVL               & \up{12.41}  & \up{10.17} & \up{5.24}  & \up{7.51}   & \up{5.83}   & \up{13.11}\\
	\bottomrule
	\end{tabular}}
	\end{table}

\subsection{Additional Implementation Details}\label{app:impl-detail}
\textbf{Toy study.} (Figure~\ref{fig:exp:toy}) We simulated a four-class LT-PLL task with each class distributed in a different region and their samples uniformly distributed in the corresponding region. The four uniform distributions are $\mathds{U}([-1,-1],[0,0])$, $\mathds{U}([0,-1],[1,0])$, $\mathds{U}([-1,0],[0,1])$, and $\mathds{U}([0,0],[1,1])$, respectively. The candidate label set for each sample consists of the ground truth and negative labels with 0.6 probability of being flipped from the other classes \textit{e.g.,} $\mathbb{P}(\overline{y}\in S|\overline{y}\neq y) = 0.6$. The sample size of each class in the training set is 30 (red), 100 (yellow), 500 (blue) and 1000 (green), respectively. The test set is balanced, with 100 samples per class. We use a two-layer MLP with 10 hidden neurons as the model for the toy study. The batch size is set to 512. The toy models are trained using SGD with momentum of $0.9$. We train the models for $50$ epochs with initial learning rate $2.0$.
\begin{table}[tb]
	\caption{Fine-grained analysis on CIFAR-100-LT with $\rho=100$ and $q\in\{0.03,0.05,0.07\}$. Many/Medium/Few corresponds to three partitions on the long-tailed data. Bold indicates superior results.}
	\label{tab:results-uni-all}
	\centering
	\resizebox{1\textwidth}{!}{
		\begin{tabular}{l|cccc|cccc|cccc}
			\toprule
			                         & \multicolumn{4}{c|}{$q=0.03$} & \multicolumn{4}{c|}{$q=0.05$} & \multicolumn{4}{c}{$q=0.07$}                                                                                                                                  \\
			\cmidrule{2-5} \cmidrule{6-9} \cmidrule{10-13}
			\multirow{-2}{*}{Method} & Many                          & Medium                        & Few                          & Overall        & Many        & Medium      & Few       & Overall        & Many       & Medium     & Few       & Overall        \\
			\midrule
			CORR                     & 68.43                         & 37.40                         & 4.50                         & 38.39          & 67.51       & 29.60       & 0.33      & 34.09          & 68.86      & 19.80      & 0.07      & 31.05          \\
			+ Oracle-LA post-hoc         & 70.37                         & 41.89                         & 7.33                         & 41.49          & 70.46       & 33.40       & 1.47      & 36.79          & 69.77      & 24.86      & 0.67      & 33.32          \\
			+ Oracle-LA           & 11.03                         & 12.34                         & 10.63                        & 11.37          & 0.34        & 4.46        & 5.47      & 3.32           & 0.00       & 0.71       & 5.77      & 1.98           \\
			+ RECORDS                & 66.37                         & 42.54                         & 13.77                        & \textbf{42.25} & {68.49}     & 40.20       & 8.50      & \textbf{40.59} & {69.97}    & 36.71      & 4.37      & \textbf{38.65} \\
			vs. CORR                 & \down{2.06}                   & \up{5.14}                     & \up{9.27}                    & \up{3.86}      & \up{0.98}   & \up{10.60}  & \up{8.17} & \up{6.50}       & \up{1.11}  & \up{16.91} & \up{4.30}  & \up{7.60}       \\
			\midrule
			PRODEN                   & 64.23                         & 32.49                         & 2.23                         & 34.52          & 62.60       & 28.66       & 0.33      & 32.04          & 65.49      & 18.51      & 0.00      & 29.40          \\
			+ Oracle-LA post-hoc         & 66.63                         & 38.03                         & 5.93                         & 38.40          & 66.86       & 31.80       & 2.23      & 35.20          & 66.60      & 24.17      & 0.50      & 31.92          \\
			+ Oracle-LA           & 2.40                          & 9.31                          & 8.97                         & 6.79           & 0.06        & 1.74        & 7.00      & 2.73           & 0.03       & 0.20       & 6.33      & 1.98           \\
			+ RECORDS                & 64.91                         & 38.40                         & 9.90                         & \textbf{39.13} & 65.94       & 36.54       & 4.53      & \textbf{37.23} & 66.14      & 33.03      & 1.83      & \textbf{35.26} \\
			vs. PRODEN               & \up{0.68}                     & \up{5.91}                     & \up{7.67}                    & \up{4.61}      & \up{3.34}   & \up{7.88}   & \up{4.20} & \up{5.19}      & \up{0.65}  & \up{14.52} & \up{1.83} & \up{5.86}      \\
			\midrule
			LW                       & 64.37                         & 25.86                         & 0.00                         & 31.58          & 63.85       & 16.40       & 0.00      & 28.09          & 62.31      & 8.11       & 0.00      & 24.65          \\
			+ Oracle-LA post-hoc         & 59.89                         & 25.57                         & 3.73                         & 31.03          & 58.03       & 16.26       & 3.20      & 26.96          & 55.14      & 8.49       & 3.10      & 23.20          \\
			+ Oracle-LA           & 50.37                         & 29.60                         & 7.70                         & 30.30          & 4.29        & 4.63        & 6.53      & 5.08           & 0.00       & 1.54       & 7.20      & 2.70           \\
			+ RECORDS                & 61.66                         & 29.29                         & 4.42                         & \textbf{33.00} & 63.17       & 16.26       & 3.45      & \textbf{28.85} & 58.51      & 11.74      & 3.25      & \textbf{25.64} \\
			vs. LW                   & \down{2.71}                   & \up{3.43}                     & \up{4.42}                    & \up{1.42}      & \down{0.68} & \down{0.14} & \up{3.45} & \up{0.76}      & \down{3.80} & \up{3.63}  & \up{3.25} & \up{0.99}      \\

			\midrule
			CAVL                     & 59.54                         & 20.80                         & 0.57                         & 28.29          & 58.57       & 12.97       & 0.00      & 25.39          & 16.71      & 6.65       & 0.13      & 8.20           \\
			+ Oracle-LA post-hoc         & 51.51                         & 27.26                         & 2.57                         & 28.34          & 57.85       & 16.51       & 0.80      & 26.27          & 7.60       & 5.54       & 3.97      & 5.80           \\
			+ Oracle-LA           & 5.29                          & 9.00                          & 7.47                         & 7.24           & 0.00        & 2.03        & 6.13      & 2.55           & 0.00       & 0.29       & 6.43      & 2.03           \\
			+ RECORDS                & 62.66                         & 35.43                         & 8.67                         & \textbf{36.93} & 54.20       & 31.43       & 5.07      & \textbf{31.49} & 44.66      & 25.49      & 1.43      & \textbf{24.98} \\
			vs. CAVL                 & \up{3.12}                     & \up{14.63}                    & \up{8.10}                    & \up{8.64}      & \down{4.37} & \up{18.46}  & \up{5.07} & \up{6.10}      & \up{27.95} & \up{18.84} & \up{1.30} & \up{16.78}     \\

			\bottomrule
		\end{tabular}}
\end{table}

\textbf{Linear Probing.} (Figure~\ref{fig:linear-probe}) We train a linear classifier on a frozen pre-trained backbone and measure the quality of the representation through the test accuracy. To eliminate the effect of the long-tailed distribution in the fine-tuning phase, the classifier is trained on a balanced dataset. Specifically, the performance of the classifier is reported on the basis of pre-trained representations for different amounts of data, including full-shot, 100-shot, and 50-shot. In the fine-tuning phase, we train the linear classifier for $500$ epochs with SGD of momentum $0.7$ and weight decay $0.0005$. The batch size is set to $1000$. The learning rate decays exponentially from $10^{-2}$ to $10^{-6}$. The loss function is set to the ordinary cross-entropy loss.

\subsection{Additional Results for Non-Uniform Candidates Generation.}\label{apx:Hie-all}
In Table~\ref{tab:results-hierarchical-main}, we show the complete experimental results on CIFAR-100-LT-NU. Under a more practical and challenging candidate generation setting, our RECORDS achieves a consistent and significant improvement over the PLL baselines for varying imbalance ratio $\rho \in \{50,100\}$ and ambiguity $q\in \{0.03,0.05,0.07\}$. It demonstrates the stable enhancement by our RECORDS under different potential candidate generation settings.

\subsection{Fine-Grained Analysis on More {PLL} Baselines}
\label{apx:FG-all}
In Table~\ref{tab:results-uni-all} and \ref{tab:results-hie-all}, we show the additional results of fine-grained analysis on CIFAR-100-LT ($\rho=100$, $q\in\{0.03,.0.05,0.07\}$) and CIFAR-100-LT-NU ($\rho=100$, $q\in\{0.03,.0.05,0.07\}$). Our RECORDS shows consistent gains on Medium and Few classes when conbined with all four PLL baselines. As a result of head-to-tail tradeoff, the accuracies of Many classes improve less or decrease slightly, which can be observed in many long-tailed learning literatures~\citep{DBLP:conf/iclr/KangXRYGFK20,DBLP:conf/iclr/MenonJRJVK21}. Our RECORDS consistently improves the overall accuracies in this tradeoff.

\begin{table}[tb]
	\caption{Fine-grained analysis on CIFAR-100-LT-NU with $\rho =100$ and $q\in\{0.03,0.05,0.07\}$. Many/Medium/Few corresponds to three partitions on the long-tailed data. Bold indicates superior results.}
	\label{tab:results-hie-all}
	\centering
	\resizebox{1\textwidth}{!}{
		\begin{tabular}{l|cccc|cccc|cccc}
			\toprule
			                         & \multicolumn{4}{c|}{$q=0.03$} & \multicolumn{4}{c|}{$q=0.05$} & \multicolumn{4}{c}{$q=0.07$}                                                                                                                                  \\
			\cmidrule{2-5} \cmidrule{6-9} \cmidrule{10-13}
			\multirow{-2}{*}{Method} & Many                          & Medium                        & Few                          & Overall        & Many        & Medium     & Few       & Overall        & Many        & Medium     & Few       & Overall        \\
			\midrule
			CORR                     & 66.71                         & 34.57                         & 3.13                         & 36.39          & 70.23       & 21.57      & 0.13      & 32.17          & 65.37       & 16.11      & 0.13      & 28.56          \\
			+ Oracle-LA post-hoc         & 69.80                         & 40.43                         & 4.57                         & 39.95          & 71.20       & 28.77      & 1.30      & 35.38          & 68.71       & 20.17      & 1.10      & 31.44          \\
			+ Oracle-LA           & 3.26                          & 9.97                          & 13.10                        & 8.56           & 0.86        & 5.34       & 12.27     & 5.85           & 0.00        & 0.25       & 16.87     & 5.92           \\
			+ RECORDS                & 65.60                         & 41.26                         & 11.77                        & \textbf{40.93} & 66.40       & 38.71      & 8.57      & \textbf{39.36} & 62.09       & 37.40      & 7.47      & \textbf{37.06} \\
			vs. CORR                 & \down{1.11}                   & \up{6.69}                     & \up{8.44}                    & \up{4.54}      & \down{3.83} & \up{17.14} & \up{8.44} & \up{7.19}      & \down{3.28} & \up{21.29} & \up{7.34} & \up{8.50}      \\
			\midrule
			PRODEN                   & 63.94                         & 29.74                         & 0.6                          & 32.97          & 65.54       & 20.09      & 0.03      & 29.98          & 62.89       & 12.69      & 0.00      & 26.45          \\
			+ Oracle-LA post-hoc         & 66.09                         & 36.37                         & 3.23                         & 36.83          & 65.94       & 26.83      & 0.83      & 32.72          & 64.17       & 18.89      & 0.00      & 29.07          \\
			+ Oracle-LA           & 1.91                          & 8.11                          & 10.47                        & 6.65           & 0.20        & 2.57       & 14.20     & 5.23           & 0.00        & 2.43       & 13.70     & 4.96           \\
			+ RECORDS                & 62.91                         & 37.77                         & 9.17                         & \textbf{37.99} & 59.20       & 36.89      & 7.10      & \textbf{35.76} & 57.29       & 34.43      & 6.97      & \textbf{34.19} \\
			vs. PRODEN               & \down{1.03}                   & \up{8.03}                     & \up{8.57}                    & \up{5.02}      & \down{6.34} & \up{16.80} & \up{7.07} & \up{5.78}      & \down{5.60} & \up{21.74} & \up{6.97} & \up{7.74}      \\
			\midrule
			LW                       & 63.69                         & 22.31                         & 0.00                         & 30.10          & 62.94       & 10.09      & 0.00      & 25.56          & 58.31       & 6.20       & 0.00      & 22.58          \\
			+ Oracle-LA post-hoc         & 58.60                         & 25.51                         & 3.07                         & 29.31          & 57.89       & 9.37       & 3.40      & 24.56          & 53.40       & 6.37       & 3.06      & 21.84          \\
			+ Oracle-LA           & 14.03                         & 17.37                         & 11.23                        & 14.36          & 1.34        & 6.74       & 16.33     & 7.73           & 0.00        & 6.57       & 5.17      & 6.85           \\
			+ RECORDS                & 61.41                         & 24.39                         & 4.45                         & \textbf{31.51} & 61.54       & 10.60      & 4.33      & \textbf{26.65} & 59.02       & 9.51       & 3.26      & \textbf{25.06} \\
			vs. LW                   & \down{2.28}                   & \up{2.08}                     & \up{4.45}                    & \up{1.41}      & \down{1.40} & \up{0.51}  & \up{4.33} & \up{1.09}      & \up{0.71}   & \up{3.31}  & \up{3.26} & \up{2.48}      \\

			\midrule
			CAVL                     & 54.94                         & 21.29                         & 1.93                         & 27.26          & 42.17       & 14.74      & 2.03      & 20.53          & 18.37       & 2.06       & 0.63      & 7.34           \\
			+ Oracle-LA post-hoc         & 25.03                         & 25.06                         & 9.97                         & 20.52          & 15.37       & 14.80      & 10.00     & 13.56          & 8.49        & 0.60       & 5.83      & 4.93           \\
			+ Oracle-LA           & 2.20                          & 6.63                          & 12.57                        & 6.86           & 0.00        & 2.77       & 13.97     & 5.16           & 0.00        & 1.20       & 14.60     & 4.80           \\
			+ RECORDS                & 56.89                         & 35.09                         & 8.60                         & \textbf{34.77} & 43.26       & 25.34      & 7.83      & \textbf{26.36} & 32.40       & 19.23      & 7.87      & \textbf{20.45} \\
			vs. CAVL                 & \up{1.95}                     & \up{13.80}                    & \up{6.67}                    & \up{7.51}      & \up{1.09}   & \up{10.60} & \up{5.80} & \up{5.83}      & \up{14.03}  & \up{17.17} & \up{7.24} & \up{13.11}     \\
			\bottomrule
		\end{tabular}}
\end{table}

\subsection{Feature Visualization}
In Figure~\ref{fig:t-sne}, We visualize the image representation produced by the feature encoder $f$ using t-SNE~\citep{JMLR:v9:vandermaaten08a} on CIFAR-10-LT with $\rho = 100$ and $q=0.5$. 
We can observe that there is some overlap in the representations of different classes of PRODEN due to the heavy class imbalance and the label ambiguity. 
PRODEN + Oracle-LA basically fails to learn discriminative representations due to the tough constant rebalancing.
In contrast, our dynamic rebalancing method RECORDS produces more distinguishable representations.

\begin{figure}[t!]
	\vspace*{-7pt}
	\centering
	\subfigure[PRODEN]{
		\includegraphics[width=0.28\columnwidth]{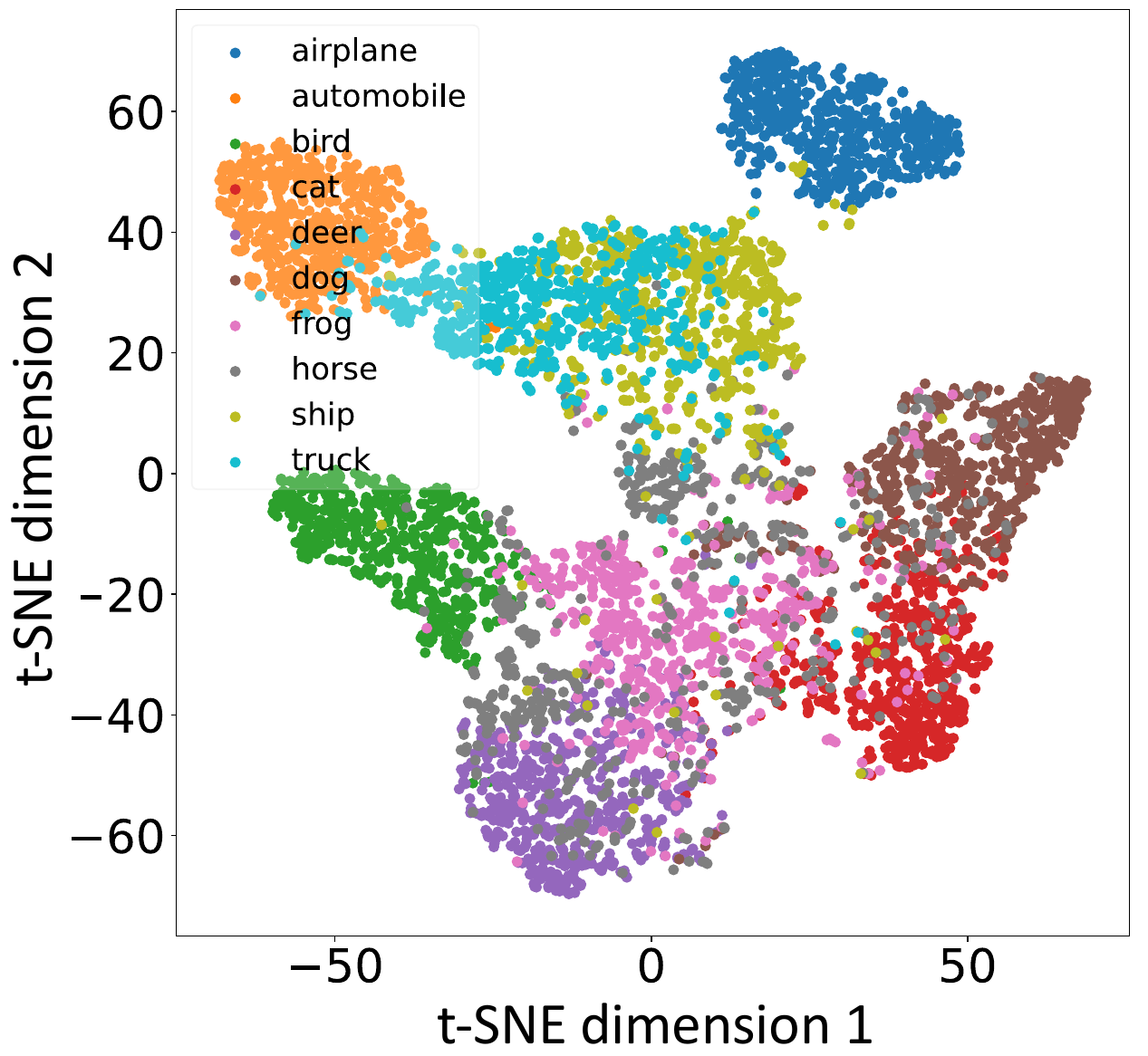}
	}
	\hspace*{5pt}
	\subfigure[PRODEN + Oracle-LA]{
		\includegraphics[width=0.28\columnwidth]{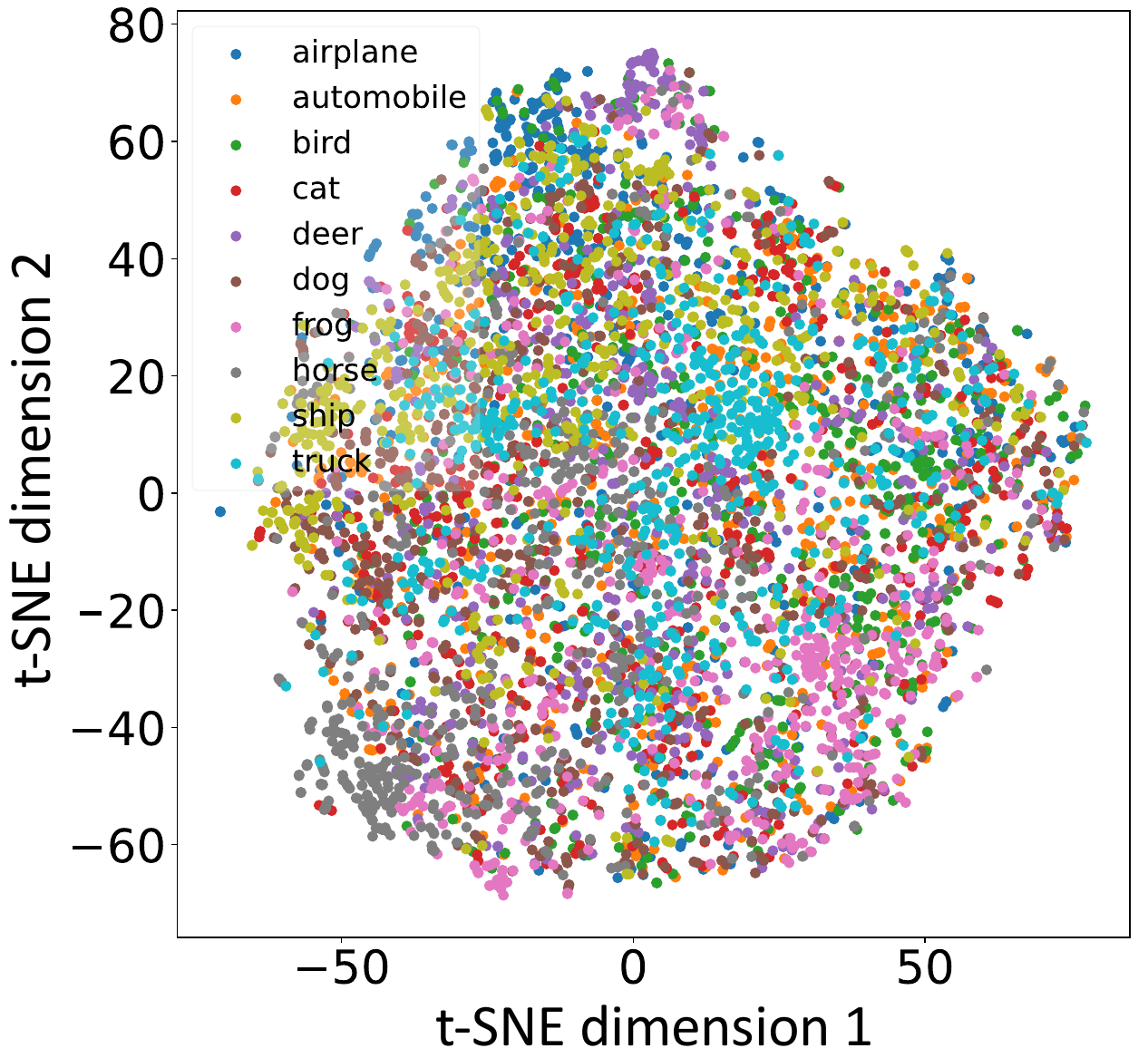}
	}
	\hspace*{5pt}
	\subfigure[PRODEN + RECORDS]{
		\includegraphics[width=0.28\columnwidth]{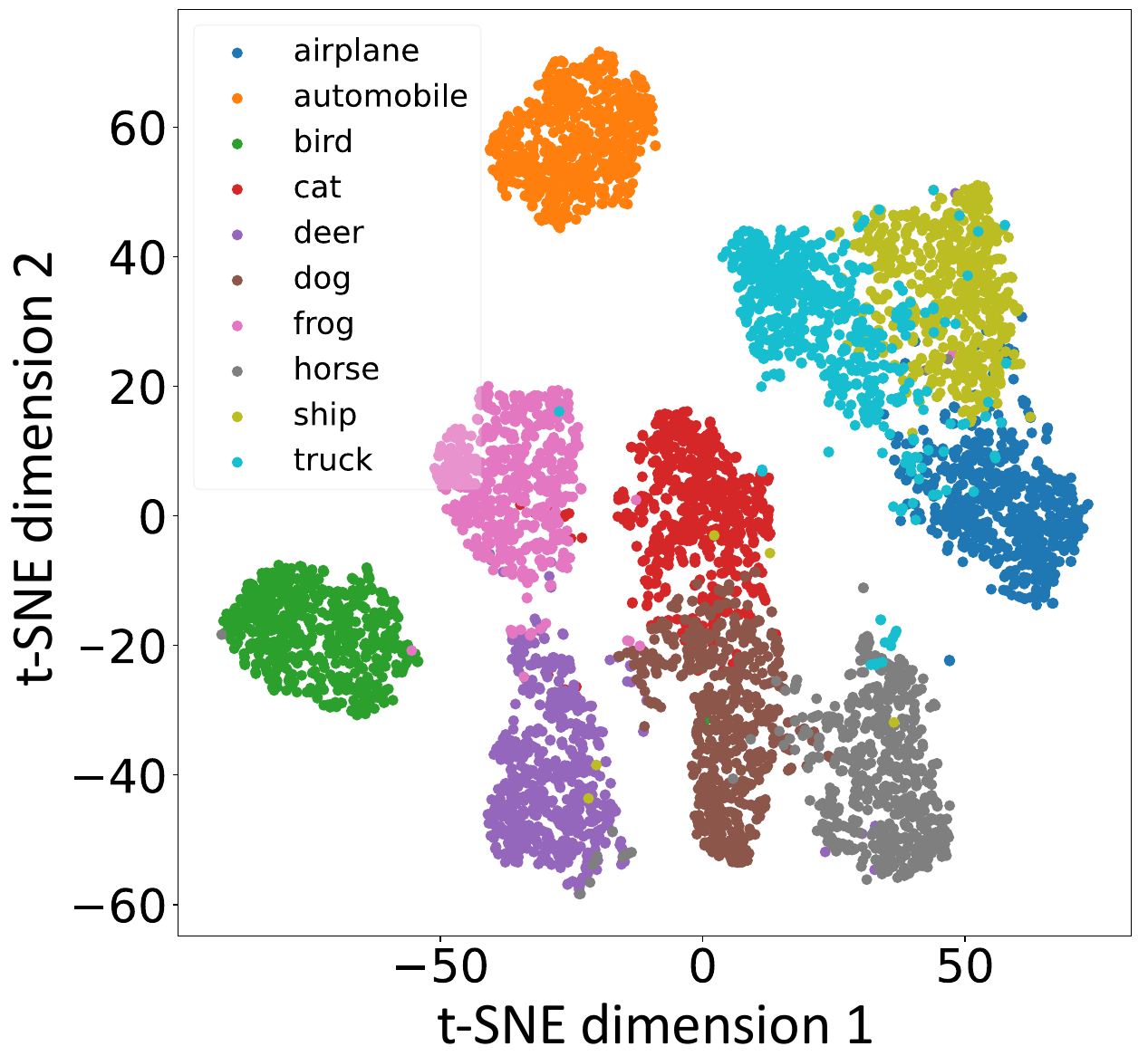}
	}
	\vspace*{-7pt}
	\caption{T-SNE of image representation on CIFAR-10-LT with 
	$\rho = 100$ and 
	$q=0.5$. 
	Note that, PRODEN + Oracle-LA post-hoc does not alter feature and shares the same figure as PRODEN.}
	\vspace*{-8pt}
	\label{fig:t-sne}
\end{figure}

\revise{\subsection{Comparison with More Baselines for Imbalanced Learning}}\label{appendix:add-LT-methods}
\revise{We additionally compare our RECORDS with two contrastive-based state-of-the-art long-tailed learning baselines, BCL~\citep{DBLP:conf/cvpr/ZhuWCCJ22} and PaCO~\citep{DBLP:conf/iccv/CuiZ00J21}, and two regularization methods for mitigating imbalance in PLL, LDD~\citep{DBLP:conf/acml/LiuWCZZH21} and SFN~\citep{DBLP:conf/acml/LiuWCZZH21}. Similar to Oracle-LA, we use the otherwise invisible oracle category distributions for BCL and PaCO, denoted as Oracle-BCL and Oracle-PaCO. We use pseudo-labels to construct positive and negative sample pairs for the contrastive learning part of BCL and PaCO, because the true labels are not visible during training. In Table~\ref{tab:deail}, we show the top-1 accuracy on CIFAR-10-LT and CIFAR-100-LT under the best PLL baseline CORR. Both BCL and PaCO use an LA-like form for the supervised part. Despite the performance gains from contrastive learning (compared to Oracle-LA), they also experience performance degradation similar to that of LA. LDD and SFN implicitly mitigate the imbalance in PLL by regularization constraints on the parameter space, and achieve a slight improvement over the PLL baseline CORR. However, they do not explicitly address the long-tailed distribution problem of LT-PLL. Our method achieves a significant improvement over all baselines. We should also note that, in these baselines except LDD and SFN, they use the oracle class distribution, which our method also does not use. Therefore, our method is more general than them.}

\begin{table}[t!]
	\centering
 \revise{
	\caption{Top-1 accuracy on three benchmark datasets. Bold indicates the superior results.}
	\label{tab:deail}
	\resizebox{1\textwidth}{!}{
	\begin{tabular}{l|ccc|ccc|ccc|ccc|c}
		\toprule
						   & \multicolumn{6}{c|}{CIFAR-10-LT}                                                                     & \multicolumn{6}{c|}{CIFAR-100-LT}                                                                    & \multirow{3}{*}{\begin{tabular}[c]{@{}c@{}}\\ PASCAL \\ VOC\end{tabular}} \\ 
						   \cmidrule{1-13}
	Imbalance ratio $\rho$ & \multicolumn{3}{c|}{50}                           & \multicolumn{3}{c|}{100}                          & \multicolumn{3}{c|}{50}                           & \multicolumn{3}{c|}{100}                          &                                                                        \\ 
	\cmidrule{1-13}
	Ambiguity $q$          & 0.3            & 0.5            & 0.7            & 0.3            & 0.5            & 0.7            & 0.03           & 0.05           & 0.07           & 0.03           & 0.05           & 0.07           &                                                                        \\ 
	\midrule
	CORR                   & 76.12          & 56.45          & 41.56          & 66.38          & 50.09          & 38.11          & 42.29          & 38.03          & 36.59          & 38.39          & 34.09          & 31.05          & 24.43                                                                  \\
	+ Oracle-LA post-hoc   & 80.70          & 58.49          & 43.44          & 72.96          & 54.64          & 41.66          & 46.94          & 40.76          & 39.07          & 41.49          & 36.79          & 33.32          & 34.12                                                                  \\
	+ Oracle-LA     & 36.27          & 17.61          & 12.77          & 29.97          & 15.80          & 11.75          & 22.56          & 5.59           & 3.12           & 11.37          & 3.32           & 1.98           & 52.51                                                                  \\
 + Oracle-BCL  & 43.76 & 25.24 & 17.34 & 33.23 & 18.34 & 15.72 & 27.23 & 10.34 & 8.34  & 14.34 & 5.82  & 4.10  & 53.23 \\
+ Oracle-PaCO & 42.45 & 26.34 & 16.23 & 34.21 & 18.23 & 14.23 & 27.37 & 11.23 & 7.23  & 15.82 & 5.62  & 4.72  & 53.19 \\
+ LDD         & 75.23 & 58.02 & 42.14 & 67.59 & 51.38 & 39.28 & 42.91 & 39.05 & 36.98 & 39.20 & 34.87 & 32.15 & 25.45 \\
+ SFN         & 75.98 & 57.13 & 41.77 & 66.91 & 50.23 & 38.71 & 43.14 & 38.62 & 37.03 & 39.31 & 33.91 & 31.67 & 25.63 \\
	+ RECORDS               & \textbf{82.57} & \textbf{80.28} & \textbf{67.24} & \textbf{77.66} & \textbf{72.90} & \textbf{57.46} & \textbf{48.06} & \textbf{45.56} & \textbf{42.51} & \textbf{42.25} & \textbf{40.59} & \textbf{38.65} & \textbf{56.46}                                                         \\
	vs. CORR			   & \up{6.45}          & \up{23.83}         & \up{25.68}         & \up{11.28}          & \up{22.81}       & \up{19.35}         & \up{5.77}          & \up{7.53}           & \up{5.92}           & \up{3.86 }          & \up{6.40}           & \up{7.60}            & \up{32.03}                                                                 \\
	\midrule
	PRODEN                 & 73.12          & 54.45          & 41.37          & 63.55          & 47.37          & 38.06          & 39.23          & 35.45          & 33.90          & 34.52          & 32.04          & 29.40          & 22.39                                                                  \\
	+ Oracle-LA post-hoc   & 77.41          & 57.14          & 42.91          & 70.71          & 48.79          & 41.38          & 43.40          & 38.64          & 35.82          & 38.40          & 35.20          & 31.92          & 31.53                                                                  \\
	+ Oracle-LA     & 27.18          & 16.97          & 11.52          & 19.51          & 14.11          & 11.17          & 12.37          & 4.09           & 2.64           & 6.79           & 2.73           & 1.98           & 48.33                                                                  \\
 + Oracle-BCL  & 31.66 & 21.92 & 14.97 & 24.14 & 18.51 & 13.85 & 16.42 & 8.45  & 7.63  & 10.28 & 7.58  & 6.01  & 49.71 \\
+ Oracle-PaCO & 31.48 & 21.09 & 14.58 & 23.82 & 19.00 & 15.76 & 16.34 & 8.05  & 7.06  & 11.66 & 7.52  & 7.40  & 50.04 \\
+ LDD         & 72.89 & 54.89 & 41.84 & 63.88 & 47.73 & 38.08 & 39.68 & 35.36 & 34.03 & 34.49 & 32.64 & 29.81 & 22.79 \\
+ SFN         & 73.00 & 54.73 & 41.72 & 64.16 & 47.89 & 38.02 & 39.32 & 36.04 & 34.19 & 35.13 & 32.60 & 29.51 & 22.94\\
	+ RECORDS               & \textbf{79.48} & \textbf{76.73} & \textbf{65.31} & \textbf{72.15} & \textbf{65.22} & \textbf{52.26} & \textbf{44.56} & \textbf{41.31} & \textbf{39.26} & \textbf{39.13} & \textbf{37.23} & \textbf{35.26} & \textbf{52.65}                                                         \\
	vs. PRODEN			   & \up{6.36}          & \up{22.28}          & \up{23.94}        & \up{8.60}          & \up{17.85}         & \up{14.2}          & \up{5.33}          & \up{5.86}          & \up{5.36}          & \up{4.61}          & \up{5.19}          & \up{5.86}           & \up{30.26}                                                                  \\
	\midrule
	LW                     & 70.11          & 37.67          & 22.73          & 64.78          & 39.57          & 23.54          & 35.54          & 29.50          & 27.86          & 31.58          & 28.09          & 24.65          & 19.41                                                                  \\
	+ Oracle-LA post-hoc   & 74.34          & 40.27          & 25.34          & 69.60          & 42.34          & 27.35          & 35.47          & 28.80          & 27.27          & 31.03          & 26.96          & 23.20          & 21.06                                                                  \\
	+ Oracle-LA     & 41.90          & 21.36          & 15.28          & 25.75          & 20.35          & 14.24          & 30.37          & 14.43          & 4.79           & 30.30          & 5.08           & 2.70           & 51.53                                                                  \\
 + Oracle-BCL  & 45.89 & 23.38 & 18.46 & 28.77 & 23.95 & 15.51 & 33.84 & 17.75 & 8.04  & 33.55 & 8.83  & 6.18  & 52.06 \\
+ Oracle-PaCO & 45.02 & 23.10 & 18.61 & 28.65 & 23.66 & 15.25 & 33.28 & 17.80 & 7.91  & 33.83 & 7.81  & 6.02  & 51.54 \\
+ LDD         & 70.20 & 37.87 & 22.42 & 64.90 & 39.49 & 23.44 & 35.23 & 29.05 & 27.20 & 31.00 & 28.27 & 24.07 & 19.71 \\
+ SFN         & 69.34 & 37.04 & 22.82 & 64.00 & 39.59 & 23.52 & 35.46 & 28.87 & 27.14 & 31.19 & 27.50 & 24.54 & 18.68 \\
	+ RECORDS      & \textbf{76.02} & \textbf{57.39} & \textbf{40.28} & \textbf{71.18} & \textbf{57.23} & \textbf{41.24} & \textbf{36.56} & \textbf{31.67} & \textbf{29.39} & \textbf{33.00} & \textbf{28.85} & \textbf{25.64} & \textbf{53.09}                                                         \\
	vs. LW					   & \up{5.91}          & \up{19.72}          & \up{17.55}          & \up{6.40}           & \up{17.66}          & \up{17.70}          & \up{1.02}          & \up{2.17}          & \up{1.53}          & \up{1.42}          & \up{0.76}          & \up{0.99}          & \up{33.68}                                                                  \\
	\midrule
	CAVL                   & 56.73          & 40.27          & 18.52          & 54.28          & 38.97          & 17.28          & 29.63          & 17.31          & 8.34           & 28.29          & 25.39          & 8.20           & 17.25                                                                  \\
	+ Oracle-LA post-hoc   & 55.23          & 39.76          & 18.34          & 51.37          & 37.28          & 14.58          & 29.65          & 14.86          & 5.76           & 28.34          & 26.27          & 5.80           & 22.27                                                                  \\
	+ Oracle-LA     & 22.16          & 14.97          & 11.50          & 18.29          & 14.23          & 10.67          & 17.31          & 4.36           & 2.83           & 7.24           & 2.55           & 2.03           & 50.78                                                                  \\
 + Oracle-BCL  & 24.15 & 17.82 & 12.71 & 20.02 & 15.45 & 13.10 & 19.56 & 26.96 & 5.79  & 10.70 & 3.37  & 3.68  & 51.69 \\
+ Oracle-PaCO & 24.60 & 18.42 & 13.51 & 21.55 & 16.04 & 12.02 & 17.97 & 6.33  & 6.00  & 9.64  & 4.12  & 4.00  & 51.80 \\
+ LDD         & 56.35 & 40.28 & 18.40 & 54.33 & 39.15 & 16.49 & 29.06 & 17.26 & 8.11  & 28.19 & 25.14 & 7.60  & 16.65 \\
+ SFN         & 56.74 & 41.29 & 18.33 & 54.87 & 39.80 & 16.89 & 29.31 & 17.46 & 9.33  & 28.34 & 25.39 & 9.00  & 18.31\\
	+ RECORDS               & \textbf{67.27} & \textbf{61.23} & \textbf{40.71} & \textbf{64.35} & \textbf{58.27} & \textbf{37.38} & \textbf{42.25} & \textbf{36.53} & \textbf{29.13} & \textbf{36.93} & \textbf{31.49} & \textbf{24.98} & \textbf{53.07}                                                         \\
	vs. CAVL					   & \up{10.54}          & \up{20.96}          & \up{22.19}          & \up{10.07}          & \up{19.30}           & \up{20.1}           & \up{12.62}          & \up{19.22}          & \up{14.27}          & \up{8.64}           & \up{6.10}            & \up{16.78}          & \up{35.82}       \\
	\bottomrule                                                          
	\end{tabular}}}
	\vspace*{-13pt}
\end{table}

\begin{table*}[!t]
\centering
\caption{Results with the addition of Mixup on benchmark datasets. The best and second best results are respectively in bold and underline.}
\label{tab:with-mixup}
\resizebox{\textwidth}{!}{%
\begin{tabular}{l|ccc|ccc|ccc|ccc|c}
\toprule
                       & \multicolumn{6}{c|}{CIFAR-10-LT}                                                                    & \multicolumn{6}{c|}{CIFAR-100-LT}                                                                    & \multicolumn{1}{c}{\multirow{3}{*}{\begin{tabular}[c]{@{}c@{}}\\PASCAL\\ VOC\end{tabular}}} \\
                       \cmidrule{1-13}
Imbalance ratio $\rho$ & \multicolumn{3}{c|}{50}                          & \multicolumn{3}{c|}{100}                          & \multicolumn{3}{c|}{50}                           & \multicolumn{3}{c|}{100}                          & \multicolumn{1}{c}{}                                                                      \\
\cmidrule{1-13}
Ambiguity $q$          & 0.3            & 0.5           & 0.7            & 0.3            & 0.5            & 0.7            & 0.03           & 0.05           & 0.07           & 0.03           & 0.05           & 0.07           & \multicolumn{1}{c}{}                                                                      \\\midrule
LW                     & 71.31          & 39.69         & 25.23          & 66.08          & 41.2           & 26.59          & 37.34          & 31.59          & 30.6           & 33.78          & 30.69          & 27.91          & 20.73                                                                                     \\
CAVL                   & 57.77          & 41.68         & 21.32          & 54.28          & 39.47          & 20.67          & 23.01          & 21.29          & 14.23          & 31.65          & 28.77          & 13.65          & 18.33                                                                                     \\
PRODEN                 & 74.22          & 56.25         & 45.11          & 65.72          & 49.23          & 42.26          & 43.02          & 40.15          & 39.1           & 38.66          & 36.23          & 35.52          & 24.47                                                                                     \\
CORR                   & 77.53          & 58.35         & 45.66          & 69.12          & 50.96          & 42.87          & 45.95          & 42.83          & 41.72          & 41.79          & 38.47          & 36.96          & 26.72                                                                                     \\
SoLar                  & {\ul 83.88}    & 76.55         & 54.61          & {\ul 75.38}    & {\ul 70.63}    & 53.15          & 47.93          & {\ul 46.85}    & {\ul 45.1}     & 42.51          & {\ul 41.71}    & 39.15          & {\ul 56.49}                                                                               \\\midrule
LW+RECORDS             & 77.22          & 59.88         & 43.98          & 72.2           & 59.87          & 43.63          & 38.16          & 33.6           & 31.53          & 35.33          & 31.55          & 28.8           & 55.19                                                                                     \\
CAVL+RECORDS           & 68.93          & 63.59         & 44.61          & 66.65          & 59.62          & 41.88          & 46.13          & 41.93          & 34.01          & 41.33          & 35.79          & 31.29          & 55.03                                                                                     \\
PRODEN+RECORDS         & 81.23          & {\ul 78.82}   & {\ul 68.03}    & 73.92          & 66.21          & {\ul 55.73}    & {\ul 48.12}    & 46.01          & 44.19          & {\ul 42.98}    & 41.03          & {\ul 40.25}    & 54.78                                                                                     \\
CORR+RECORDS           & \textbf{84.25} & \textbf{82.5} & \textbf{71.24} & \textbf{79.79} & \textbf{74.07} & \textbf{62.25} & \textbf{52.08} & \textbf{50.58} & \textbf{47.91} & \textbf{46.57} & \textbf{45.22} & \textbf{44.73} & \textbf{58.45}  \\\bottomrule                          
\end{tabular}%
}
\end{table*}

\subsection{Comparison with the Concurrent LT-PLL Work SoLar}\label{appendix:solar}
SoLar~\citep{wang2022solar} is a concurrent LT-PLL work published in NeuIPS 2022. At the time of our submission, we did not have access to this work and make comparisons with it. It improves the label disambiguation process in LT-PLL through the optimal transport technique. 
However, it requires an extra outer-loop to refine the label prediction via sinkhorn-knopp iteration, which increases the computational complexity. Different from SoLar, this paper tries to solve the LT-PLL problem from the perspective of rebalancing in a lightweight and effective manner. 

Note that compared to other PLL baselines and previous experiments in this paper, SoLar additionally applies Mixup~\citep{DBLP:conf/iclr/ZhangCDL18} in the experiments. To align the experimental setup, we show the results with the addition of Mixup in Table~\ref{tab:with-mixup}. All other settings remain the same as previous sections. Compared to SoLar, which designed for the LT-PLL problem, our method still offers consistent improvements. The code implementation of comparisons with Solar can be found \href{https://github.com/MediaBrain-SJTU/RECORDS-LTPLL}{here}.

\end{document}